\theoremstyle{plain}
\newtheorem{theorem}{Theorem}[section]
\newtheorem{proposition}[theorem]{Proposition}
\newtheorem{lemma}[theorem]{Lemma}
\theoremstyle{definition}
\newtheorem{definition}[theorem]{Definition}
\newtheorem{assumption}[theorem]{Assumption}
\theoremstyle{remark}
\newtheorem{remark}[theorem]{Remark}
\title{Adaptive Sentencing Prediction with Guaranteed Accuracy and Legal Interpretability}
\author{%
    Yifei Jin\thanks{These authors contributed equally to this work.} \\
  SAIS, AMSS \\
  Chinese Academy of Sciences \\
  Beijing, China \\
  \texttt{jinyifei@amss.ac.cn} \\
  \And
  Xin Zheng\footnotemark[1] \\
  AMSS \\
  Chinese Academy of Sciences \\
  Beijing, China \\
  \texttt{zhengxin@amss.ac.cn} \\
  \And
  Lei Guo \\
  AMSS \\
  Chinese Academy of Sciences \\
  Beijing, China \\
  \texttt{lguo@iss.ac.cn} \\
}
\begin{document}

\maketitle

\begin{abstract}

Existing research on judicial sentencing prediction predominantly relies on end-to-end models, which often neglect the inherent sentencing logic and lack interpretability-a critical requirement for both scholarly research and judicial practice. To address this challenge, we make three key contributions:First, we propose a novel Saturated Mechanistic Sentencing (SMS) model, which provides inherent legal interpretability by virtue of its foundation in China's Criminal Law. We also introduce the corresponding Momentum Least Mean Squares (MLMS) adaptive algorithm for this model. Second, for the MLMS algorithm based adaptive sentencing predictor, we establish a mathematical theory on the accuracy of adaptive prediction without resorting to any stationarity and independence assumptions on the data. We also provide a best possible upper bound for the prediction accuracy achievable by the best predictor designed in the known parameters case. Third, we construct a Chinese Intentional Bodily Harm (CIBH) dataset. Utilizing this real-world data, extensive experiments demonstrate that our approach achieves a prediction accuracy that is not far from the best possible theoretical upper bound, validating both the model's suitability and the algorithm's accuracy.
\end{abstract}

\section{Introduction}
\label{submission}

\begin{figure}[htbp]
    \centering
    \includegraphics[width=0.9\linewidth]{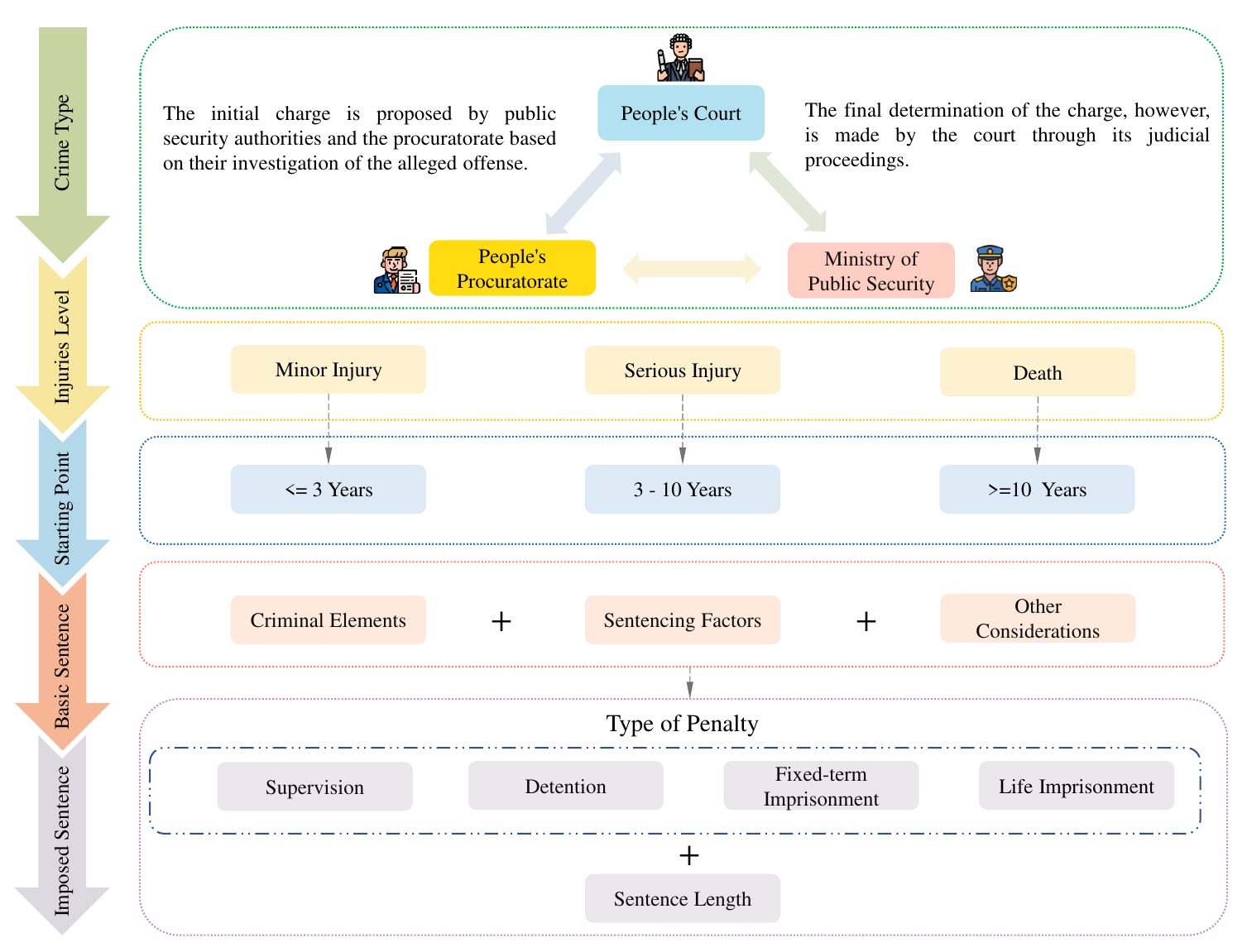}
    \caption{Sentencing Process for Intentional Bodily Harm Cases}
    \label{fig:enter1121}
\end{figure}    

In recent years, with the rapid development of artificial intelligence, Legal Judgment Prediction (LJP) has become a key research area at the intersection of AI and law \cite{aletras2016predicting}. These advancements hold the potential not only to enhance judicial efficiency but also to promote fairness and transparency within the legal system. Sentencing prediction has long been one of the most challenging tasks in LJP, due to the complexity of capturing underlying mechanisms \cite{lyu2023multi, yang2022mve, wu2020integrating}, the lack of theoretically grounded algorithms \cite{yao2020gated}, and the limited availability of relevant data \cite{feng2022legal, medvedeva2023legal}. In this work, we use China's judicial sentencing prediction as a case study, proposing a solution to address these challenges in sentencing prediction, with potential implications for broader applications. 

Firstly, sentencing follows a well-structured “three-main-stage” judicial framework in China: establishing a sentencing starting point, determining a baseline sentence based on circumstances, and finalizing the term through adjustments informed by legal features. This rule-based, interpretable logic provides a strong foundation for developing structured and explainable sentencing prediction models, as we aim to explore in this work. However, most of existing research on judicial sentencing prediction often overlooked the structured judicial logic inherent in China's legal system. For example, although Chen et al. \cite{chen2019charge} proposed a charge-based sentencing prediction method, demonstrating that analyzing individual charges before integrating results could enhance accuracy, this approach only partially captured charge-sentence correlations without fully addressing systemic judicial reasoning. Similarly, Yue et al. \cite{yue2021neurjudge} introduced NeurJudge, a neural framework that decomposes case facts into specific elements to improve prediction interpretability. Despite its advancements in fact segmentation, NeurJudge lacked robust integration with China's judicial logic. Recent studies have made strides toward bridging this gap. For example, Huang \cite{Huang} enhanced interpretability in Chinese legal judgment prediction through legal language modeling, while Sun et al. \cite{SUN2024122177} developed KnowPrompt4LJP, leveraging prompt templates to align pre-trained language models with legal tasks. Nevertheless, these efforts \cite{zhang2024hd} still fall short of systematically incorporating China's structured judicial reasoning. This gap undermines model interpretability and, particularly in data-scarce scenarios, limits the ability to capture the nuanced complexity of judicial decision-making, and may ultimately impact prediction performance \cite{staliunaite2024comparative}.  Fortunately, Wang et al. \cite{wang2022applications} has established a saturated model (S-model) that aligns with judicial logic. However, the prioritization of conviction features as prescribed by law is not reflected in the S-model.

Secondly, the continuous and expanding accumulation of legal judgment documents poses significant challenges for sentencing prediction systems. Therefore, offline batch learning methods are not well-suited to such dynamic environments due to their reliance on full dataset access and reprocessing, which becomes computationally prohibitive as data volumes grow \citep{gama2014survey}. Moreover, alternative buffer-based or multi-pass online methods face limitations in non-stationary legal domains where future data volume is unknown and distribution shifts occur due to evolving societal norms and legal interpretations \cite{gao2013one,zhao2011online}. These distributional changes render static models ineffective over time \cite{kolter2007dynamic, minku2009impact}. To address these challenges, online learning approaches that enable models to adapt incrementally to new data without complete retraining are essential \citep{cesa2021online}. 
Fortunately, the Least Mean Squares (LMS) algorithm \cite{guo1997necessary}, are particularly appealing due to their simplicity, resilience to noise, and independence from prior statistical assumptions. Existing analysis indicates that Momentum LMS offers a faster convergence rate and a smoother convergence trajectory compared to the traditional LMS algorithm \cite{roy1990analysis}. Building upon these insights, we will introduce a novel Momentum LMS algorithm specifically designed for legal prediction tasks.

Thirdly, establishing reliable theoretical bounds on prediction error for online and non-i.i.d. (independent and identically distributed) data—such as the sentencing data—is important, yet it remains a challenging problem. Most existing machine learning theories on prediction error focus on offline data; specifically, they provide upper bounds on the prediction error of a fixed-parameter model evaluated on a given test set. These theories may have limited applicability in online data scenarios. Moreover, such theoretical results often rely on the idealized i.i.d. assumption, which is frequently violated in real-world scenarios \cite{shalev2014understanding, mohri2018foundations}. Therefore, establishing theoretical bounds on prediction error for online and non-i.i.d. sentencing data warrants further investigation.

To address the aforementioned challenges, we use intentional bodily harm as a case study, developing a mechanism model aligned with sentencing logic and designing a corresponding Momentum LMS algorithm. We demonstrate the reliability of both the model and algorithm on a self-constructed dataset of intentional bodily harm cases. In summary, the main contributions of this paper are as follows:

\begin{itemize}

\item 
Based on the "three-step" sentencing logic, we introduce a refined logic-based Saturated Mechanism Sentencing (SMS) Model. Unlike the existing S-model \cite{wang2022applications}, our approach incorporates the legal priority order of conviction-related sentencing features, ensuring better alignment with real-world judicial practices. We design a Momentum Least Mean Squares (MLMS) algorithm , which operates in an online learning framework. This design enables efficient handling of parameter drift—a common challenge in judicial sentencing—while maintaining high computational advantages.
\item 
For the proposed sentencing model and adaptive algorithm, we establish a theoretical upper bound on adaptive prediction accuracy and derive the best possible upper bound in the ideal case of known parameters, both under the general data conditions. Our theoretical framework is broadly applicable, extending the impact of our theoretical findings beyond judicial sentencing. 
\item 
We construct an up-to-date Chinese Intentional Bodily Injury (CIBH) dataset and conduct extensive experiments to evaluate our approach. The results demonstrate that our model and algorithm consistently outperform standard baselines, with adaptive prediction accuracy close to the best theoretical upper bound, validating both the effectiveness and robustness of our method.
To the best of our knowledge, this appears to be the first such kind of results on judicial sentencing in the literature.

\end{itemize}

The remainder of this paper is organized as follows. In section \ref{classification and prediction}, we will present the proposed logic-based sentencing prediction model along with the corresponding optimization algorithm.  Section \ref{Experiment} will introduce the newly constructed real-world dataset and report empirical results, followed by some concluding remarks in Section \ref{conclusion}.

\section{Sentence Length Prediction} \label{classification and prediction}

In this section, we firstly construct a logic‑based sentencing prediction model for real‑world judicial sentencing, then we introduce the corresponding momentum Least Mean Squares (MLMS) algorithm. Finally, we present the main theorems.

\subsection{ Saturated Mechanistic Sentencing Model}
To align with the sentencing process in judicial practice, we introduce a new saturated mechanistic sentencing (SMS) model. Although the SMS model was previously mentioned in a domestic workshop by us (\cite{guo2024integration}), this paper marks its first formal publication. To be specific, the SMS model is described as follows:
\begin{equation}\label{smodel}
\begin{aligned}
y_{k+1}=&S_k \big( \underbrace{ (\overbrace{a_k}^{\parbox{1cm}{ \centering \scriptsize starting point}} +\overbrace{b x_k^{(1)}+c x_k^{(2)}+d x_k^{(3)} + e x_k^{(4)}}^{\text{increase in sentence}})}_{\text{baseline sentence}} \\
&\times\underbrace{\prod_{i=1}^{m_1}(1 + p_i z_k^{(i)})}_{\parbox{2cm}{\centering \scriptsize impact of conviction-related sentencing features}}\times
\underbrace{(1+ \sum_{j=1}^{m_2}q_jv_k^{(j)} + \eta)}_{\text{ impact of other sentencing features }}+\varepsilon_{k+1}\big),
\end{aligned}
\end{equation}
\noindent where the saturation function \(S_k(\cdot)\)  is defined as follows:
\begin{gather}
S_k(x)=
\begin{cases}
U_k, & x>U_k, \\
x, & L_k \leq x \leq U_k, ~~ k =~0,~1, ~2, \cdots,\\
L_k, & x<L_k,
\end{cases}
\end{gather}
\noindent and where \(y_{k+1} \in \mathbb{R}\) denotes the sentence length for the case labeled as k, ordered chronologically. \(a_k \in \mathbb{R}\) is the starting point of sentencing; \(x_k^{(i)}\in \mathbb{R}\) \((i=1,2,3,4)\) represent the number of victims with slight injuries, minor injuries, serious injuries, fatalities, respectively; \(b,c,d,e\in \mathbb{R}\) are real numbers that quantify the weights of the corresponding additional sentence added to the offender for each additional victim with different levels of injury; 
\(m_1\) is the number of conviction-related sentencing features with legal priority order, and \( z_k^{(i)} \) represent the corresponding conviction features specified in the sentencing guidelines, \(p_i\) are the corresponding unknown weighting parameters; \(m_2\) is the number of other sentencing features specified in law, and \( v_k^{(j)} \) represent other sentencing features, and \(q_j\) are the corresponding unknown weighting parameters; \(\eta \in \mathbb{R}\) is a bias term, representing the comprehensive influence of other possible sentencing factors not included in \(x_k^{(1)}\), \(x_k^{(2)}\), \(x_k^{(3)}\), \(x_k^{(4)}\), \( z_k^{(i)} \) and \( v_k^{(j)} \); \(\varepsilon_k \in \mathbb{R}\) represent a possible random noise effect; \(U_k \in \mathbb{R}\) and \(L_k \in \mathbb{R}\) are the upper bound and the lower bound of the announced sentence, respectively, which are prescribed in the Criminal Law of the People's Republic of China and will be detailed in \cref{Experiment}.
\begin{remark}

Our model achieves inherent judicial interpretability through its structural design, as highlighted in \eqref{smodel}, which improves upon the S-model \cite{wang2022applications}. Specifically, this interpretability stems from several key features:

\begin{itemize}[left=0pt]
\item Strict adherence to the three-main-stage-sentencing logic in the Criminal Law of China and the related guidelines: legal starting point → baseline sentence → prioritized adjustments based on conviction features .
\item Employment of saturated functions to ensure predictions align with statutory sentencing ranges given in the law.
\item Compliance of sentencing factor application (sequence and effectiveness) with the law.
\end{itemize}

This structural embedding of legal principles, particularly the multiplicative interaction of conviction-related features on the sentencing benchmark (a nuance missed by the S-model), and the real-world interpretability of parameter signs and magnitudes regarding their impact on sentence length, distinguishes the SMS model. Consequently, unlike black-box models like neural networks or standard regression models (e.g., logistic regression relying on mere linear combinations), the SMS model possesses inherent legal and procedural interpretability. By conforming to judicial logic, our model yield reliable and understandable results for judges, addressing the judiciary's emphasis on fairness and reliability—a guarantee absent in statistical model and black-box models that lack inherent interpretability.

\end{remark}

\subsection{Algorithm Design}\label{MLMS}

Developing an effective and theoretically grounded adaptive algorithm for the SMS model is crucial for achieving enhanced predictive performance. However, the significant nonlinearity inherent in the current mechanistic model \eqref{smodel} presents substantial obstacles in designing and rigorously analyzing such algorithms, particularly when dealing with general data whose statistical properties deviate far from the idealized i.i.d. setting. To overcome this theoretical challenge, we first perform an equivalent transformation of the model within the saturated function into a ``linear regression" form. Drawing specifically on the methodology proposed in \cite{wang2022applications}, we construct a new high-dimensional regression vector $\varphi_k$ by fully expanding the product terms within the saturation function $S_k(\cdot)$. We then introduce the corresponding unknown weighting parameter vector $\theta$, such that the inner product $\varphi_k^{\top}\theta$ precisely matches the original expression inside the saturation function. The transformed model, resulting from this reformulation, is expressed as follows:
\begin{equation} \label{slinear1}
y_{k+1}=S_k\left(\varphi_k^{\top} \theta+ \varepsilon_{k+1}\right), \quad k=0,1,2, \cdots,
\end{equation}
where $\varepsilon_{k+1} \in \mathbb{R}$ is the noise, $\varphi_k \in \mathbb{R}^p~(p \geq 1)$ and  $\theta \in \mathbb{R}^p$ are specified as the follows:

\begin{equation}
\begin{aligned}
\varphi_k= & {\left[a_k \phi_{1 k}^{\top}, ~a_k\left(\phi_{2 k} \otimes \phi_{1 k}\right)^{\top}, ~x_k^{(1)} \phi_{1 k}^{\top},~  x_k^{(1)}\left(\phi_{2 k} \otimes \phi_{1 k}\right)^{\top}, ~x_k^{(2)} \phi_{1 k}^{\top}, ~x_k^{(2)}\left(\phi_{2 k} \otimes \phi_{1 k}\right)^{\top}, \right.} \\
& \left. x_k^{(3)} \phi_{1 k}^{\top}, ~x_k^{(3)}\left(\phi_{2 k} \otimes \phi_{1 k}\right)^{\top} , x_k^{(4)} \phi_{1 k}^{\top},~x_k^{(4)} \left(\phi_{2 k} \otimes \phi_{1 k}\right)^{\top}\right]^{\top}, \\
\theta =& \big[(1+\eta)\vartheta_1^{\top}, (\vartheta_2 \otimes \vartheta_1)^{\top}, b(1+\eta)\vartheta_1^{\top}, 
	 b(\vartheta_2 \otimes \vartheta_1)^{\top},  c(1+\eta)\vartheta_1^{\top}, c(\vartheta_2 \otimes \vartheta_1)^{\top},\\
         &d(1+\eta)\vartheta_1^{\top}, d(\vartheta_2 \otimes \vartheta_1)^{\top}, e(1+\eta)\vartheta_1^{\top}, e(\vartheta_2 \otimes \vartheta_1)^{\top}
	\big]^{\top},
\end{aligned}
\end{equation}
where
\begin{equation}
\begin{aligned}
\phi_{1k} &= \left[
1,\,
z_k^{(1)},\, z_k^{(2)},\, \dots,\, z_k^{(m_1)},\,
z_k^{(1)}z_k^{(2)},\, z_k^{(1)}z_k^{(3)},\, \dots,\, z_k^{(m_1-1)}z_k^{(m_1)},\,
\dots,\,
z_k^{(1)}z_k^{(2)}\cdots z_k^{(m_1)}
\right]^{\top},\\
\phi_{2k} &= \left[
v_k^{(1)},\, v_k^{(2)},\, \dots,\, v_k^{(m_2)}
\right]^{\top}, \\
 \vartheta_1 &=[
	1, \ p_1, \ \cdots, \ p_{m_1}, \ p_1 p_2, \ \cdots, \ p_{m_1-1} p_{m_1}, \cdots, \ p_1 \cdots p_{m_1} ]^{\top}, \\
\vartheta_2 &= [ q_1, \ \cdots, \ q_{m_2} ]^{\top},
\end{aligned}
\end{equation}
and $\otimes$ denotes the Kronecker product between two vectors.

In practice, as the legal system continues to evolve, the sentencing guidelines may undergo minor revisions to adjust the influence of certain sentencing features on the length of imprisonment. Such adjustments may cause (parts of) the parameter vector $\theta$, which captures the effect of these features, to vary over time. Accordingly, we consider a time-varying parameter vector $\theta_k$, then \eqref{slinear1} becomes the following saturated time-varying parameter model:
\begin{equation} \label{saturation time varying}
y_{k+1}=S_k\left(\varphi_k^{\top} \theta_k+ \varepsilon_{k+1}\right), \quad k=0,1,2,\cdots
\end{equation}

In order to attain theoretically guaranteed sentence length prediction performance for model \eqref{saturation time varying}, we need to design the corresponding algorithm and establish a related theory, for which we introduce the following notations and assumptions:

\textbf{Notations and Assumptions}

\noindent\textbf{Notations.}  $\|\cdot\|$ denotes the Euclidean norm for vectors and matrices. $|\cdot|$ is the $\ell_1$-norm for vectors, i.e., the sum of the absolute values of its components. For a matrix $M$, $M^{\top}$ denotes its transpose. Let $\{\mathcal{F}_k, k \geq 0\}$ be a non-decreasing sequence of $\sigma$-algebras, and let $\mathbb{E}[\cdot \mid \mathcal{F}_k]$ denote the conditional mathematical expectation. For simplicity, we may use $\mathbb{E}_k[\cdot]$ or $\mathbb{E}_k\{\cdot\}$ to place $\mathbb{E}[\cdot \mid \mathcal{F}_k]$. A sequence $\{x_k, \mathcal{F}_k, k \geq 0\}$ is adapted if $x_k$ is $\mathcal{F}_k$-measurable for all $k \geq 0$.

\begin{assumption}\label{A1}
The adapted sequence \( \{ \varphi_k, \mathcal{F}_k, k \geq 0\}\) is bounded, i.e., \(\sup\limits_{k \geq 0} \| \varphi_k\| < \infty\), and the parameter vector \(\theta_k\) belongs to a known convex compact set $\mathcal{D}=\left\{x=\left(x_1, \cdots, x_{p}\right)^{\mathrm{T}} \in \mathscr{R}^{p},\left|x_i\right| \leqslant L, 1 \leqslant i \leqslant p\right\}$. Moreover, the parameter sequence ${\theta_k}$ is slowly time-varying in the following time-averaged sense, i.e.,
\begin{equation} \label{bountheta}
\limsup_{n-m \to \infty} \frac{1}{n-m} \sum_{i=m+1}^n \|\theta_i - \theta_{i-1}\| \leq \xi,
\end{equation}
where $\xi \in (0,1)$ is a positive constant.
\end{assumption}

We remark that  the above condition is general than the following condition:
\begin{equation} \label{bountheta2}
\left\|\theta_i - \theta_{i-1}\right\| \leq \xi, \quad \forall i \geq 1.
\end{equation}
 Moreover, one can  deduce that there is a bounded random sequence $\{C_k, k\geq 0\}$ such that
\begin{equation}
    \sup_{\theta \in \mathcal{D}} |\varphi_k^{\top}\theta| \leq C_k.
\end{equation}

\begin{assumption}\label{A2}
The thresholds $\{L_k, \mathcal{F}_k, k\geq 0\}$ and $\{U_k, \mathcal{F}_k, k\geq 0\}$ are known adapted sequences. Moreover, there exists positive constants $c$ and $M$ such that $0<c \leq L_k < U_k \leq M<\infty$ for all $k\geq 0$. 
\end{assumption}
Assumption \ref{A2} is quite general, requiring $y_k$ to be positive and bounded. The judicial sentencing data in China inherently satisfy this requirement.

\begin{assumption}\label{A3}
The conditional expectation function $G_k(x)=\mathbb{E}_k\left[S_k\left(x+ \varepsilon_{k+1}\right)\right]$ is known and differentiable with its derivative denoted by $G_k^{\prime}(\cdot)$. Furthermore, there exists a constant $C>\sup\limits_{k>0} C_k$ such that
\begin{equation}
\inf _{|x| \leq C, ~k \geq 0} G_k^{\prime}(x) >0,~ \text { a.s. }
\end{equation}
For notational simplicity, we denote
\begin{equation}
\inf _{|x| \leq C_k} G_k^{\prime}(x)=\underline{g}_k, \sup _{|x| \leq C_k} G_k^{\prime}(x)=\bar{g}_k .
\end{equation}
\end{assumption}

\begin{remark}
From the definition of $S_k(\cdot)$, we know that $S_k^{\prime}(\cdot)=0$ or $ 1$ almost surely, so that $ \sup\limits_{k\geq 0}G_k^{\prime}(\cdot)\leq 1$ almost surely  by a direct calculation. Moreover, it is worth noting that $G_k(\varphi_k^{\top}\theta_k)$ is the mean‑squared‑error optimal predictor of $y_{k+1}$ when $\theta_k$ is known. However, since the true parameter $\theta_k$ is not known \textit{a priori} in our work, it needs to be estimated. Upon obtaining an estimate $\hat{\theta}_k$, we replace $\theta_k$ with $\hat{\theta}_k$ in the optimal predictor, yielding the adaptive predictor $G_k(\varphi_k^{\top} \hat{\theta}_k)$.
\end{remark}

\begin{assumption} \label{bouned noise}
The conditional density function $f_{k+1}(\cdot)$ of the noise $\varepsilon_{k+1}$ given the $\sigma$-algebra $\mathcal{F}_k$ satisfies the following lower bound condition, i.e, for any fixed $h > 0$, there exists a constant $\epsilon_{h} > 0$ such that
\begin{equation}
    \inf_{|x| \leq h} f_{k+1}(x) \geq \epsilon_{h}, \quad \text{a.s.}
\end{equation}
\end{assumption}

\textbf{The Momentum Least Mean Squared Algorithm}

To design the algorithm, we introduce the following projection operator.
\begin{definition}\label{projection}
The projection operator $\Pi_{\mathcal{D}}(\cdot)$ is defined as
\begin{equation}
   \pi_{\mathcal{D}}(x)=\underset{y \in \mathcal{D}}{\arg \min }\|x-y\|, \quad \forall x \in \mathbb{R}^p ,
\end{equation}
where $\mathcal{D}$ is the convex compact set introduced in Assumption \ref{A1}.
\end{definition}
We are now ready to introduce the following algorithm.

\begin{algorithm}[H]
\caption{Adaptive Momentem LMS}\label{prediction}
\label{alg:amlms-p}
    \textbf{Input:} Data $\{\varphi_k, y_k\}_{k=1}^{T}$; Step size $\mu \in (0, 1]$; Momentum coefficient $\beta_k = \frac{1}{k^{\delta}}, \delta \in (0,1)$; Projection set $\mathcal{D}$.

\textbf{Output:} Prediction $\{\hat{y}_k\}_{k=1}^{T}$; Parameter estimates $\{\hat{\theta}_k\}_{k=1}^{T}$.

\begin{algorithmic}[1]\label{algorithm1}
    \STATE Initialize: $\hat{\theta}_0 = \mathbf{0}$, $\hat{\theta}_{-1} = \mathbf{0}$.
    \FOR{$k = 0, 1, 2, 3, \cdots$ }
        \STATE Compute adaptive learning rate:
        $$
        \alpha_k = \frac{\mu}{1 + \|\varphi_k\|^2}
        $$
        \STATE Compute gradient:
        $$
        g_k = - \left(y_{k+1}-G_k\left(\varphi_k^{\top} \hat{\theta}_k\right)\right) G_k^{\prime}\left(\varphi_k^{\top} \hat{\theta}_k\right) \varphi_k
        $$
        \STATE Parameter update:
        $$
        \hat{\theta}_{k+1} = \pi_D\left[ \hat{\theta}_k - \alpha_k g_k + \beta_k (\hat{\theta}_k - \hat{\theta}_{k-1}) \right]
        $$
        \STATE Prediction:
        $$
        \hat{y}_{k+1} = S_k(\varphi_{k}^\mathrm{T} \hat{\theta}_{k})
        $$
    \ENDFOR
\end{algorithmic}
\end{algorithm}
\begin{remark}
Algorithm \ref{algorithm1} is an adaptive one because it updates the next parameter estimate $\hat{\theta}_{k+1}$ only based on the current parameter estimate $\hat{\theta}_k$ and the newly acquired online data $\{\varphi_k, y_{k+1}\}$. Notably, the index $k$ herein represents sequential time updates driven by the received data stream, distinguishing it from the iterative search procedures employed in offline optimization algorithms.

\end{remark}

\subsection{Main Theorems}\label{MAIN THEOREM}

We now establish theoretical upper and lower bounds on the prediction accuracy, respectively. The accuracy metric is defined as an averaged relative prediction accuracy:
\begin{equation}\label{RAmetric}
  \operatorname{RA}(T) = 1 - \frac{1}{T} \sum_{k=1}^T \frac{\left|y_{k+1} - \hat{y}_{k+1}\right|}{y_{k+1}},
\end{equation}
where $T$ denotes the number of cases that have arrived online, $y_k$ is the true sentence length for the $k$-th case and is uniformly bounded below by a positive threshold specified in law,  $\hat{y}_k$ is the corresponding adaptive predictor.

We first give a lower bound to the accuracy of the adaptive predictor $\hat{y}_{k+1} = G_k'(\varphi_k^\top \hat{\theta}_k)$.

\begin{theorem}\label{main theorem}
Under Assumptions \ref{A1}--\ref{A3}, we have the following lower bound to the accuracy of the adaptive predictor:

\begin{equation}\label{bound error upper}
     \limsup _{T \rightarrow \infty} RA(T) \geq  1- \frac{\mathcal{C}(\sigma_w +\xi^{\frac{1}{2}})}{c},~ \text{a.s.}
\end{equation}

where $\sigma_w^2 = \limsup\limits_{k\geq 0} \mathbb{E}[w_{k+1}^2 \mid \mathcal{F}_{k}]$, $w_{k+1}=y_{k+1}-G_k\left(\varphi_k^{\top} \theta_k\right)$ and $\mathcal{C}$ is a positive constant which does not dependent on $\sigma_w^2$ and $\xi$. 
\end{theorem}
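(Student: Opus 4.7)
The plan is to reduce the lower bound on $\mathrm{RA}(T)$ to an upper bound on the averaged absolute prediction error, split that error into a martingale-noise contribution and a parameter-tracking contribution, and then bound the tracking contribution by a Lyapunov analysis of the MLMS recursion. Because Assumption~\ref{A2} gives $y_{k+1}\ge c$, one has $1-\mathrm{RA}(T)\le (cT)^{-1}\sum_{k=1}^T |y_{k+1}-\hat y_{k+1}|$, so it suffices to bound the averaged $L^1$ prediction error by $\mathcal{C}(\sigma_w+\xi^{1/2})$.

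Writing $\tilde\theta_k=\hat\theta_k-\theta_k$ and using the defining identity $G_k(\varphi_k^\top\theta_k)=\mathbb{E}_k[y_{k+1}]$, decompose $y_{k+1}-\hat y_{k+1}=w_{k+1}+[G_k(\varphi_k^\top\theta_k)-G_k(\varphi_k^\top\hat\theta_k)]$. By the mean value theorem together with $\sup G_k'\le 1$ (the Remark following Assumption~\ref{A3}), $|y_{k+1}-\hat y_{k+1}|\le |w_{k+1}|+|\varphi_k^\top\tilde\theta_k|$, and Cauchy--Schwarz gives
\begin{equation*}
\frac{1}{T}\sum_{k=1}^T|y_{k+1}-\hat y_{k+1}|\;\le\;\Bigl(\tfrac{1}{T}\sum w_{k+1}^2\Bigr)^{1/2}+\Bigl(\tfrac{1}{T}\sum (\varphi_k^\top\tilde\theta_k)^2\Bigr)^{1/2}.
\end{equation*}
Since $\{w_{k+1},\mathcal{F}_{k+1}\}$ is a bounded martingale difference sequence (Assumptions~\ref{A1}--\ref{A2}) and $\limsup_k \mathbb{E}_k[w_{k+1}^2]\le \sigma_w^2$, the martingale strong law applied to $w_{k+1}^2-\mathbb{E}_k w_{k+1}^2$ combined with Cesaro's inequality delivers $\limsup_T T^{-1}\sum w_{k+1}^2 \le \sigma_w^2$ almost surely, which handles the first square root.

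The core of the argument is showing $\limsup_T T^{-1}\sum(\varphi_k^\top\tilde\theta_k)^2 \le K_1\sigma_w^2+K_2\xi$. I would take the Lyapunov function $V_k=\|\tilde\theta_k\|^2$. Non-expansiveness of $\pi_{\mathcal{D}}$ (which contains $\theta_{k+1}$), expansion of the update, and MVT applied to $G_k(\varphi_k^\top\theta_k)-G_k(\varphi_k^\top\hat\theta_k)$ produce, after taking $\mathbb{E}_k[\cdot]$, the drift inequality
\begin{equation*}
\mathbb{E}_kV_{k+1}\le V_k - c_1\alpha_k(\varphi_k^\top\tilde\theta_k)^2 + c_2\alpha_k^2\mathbb{E}_k w_{k+1}^2 + 2\beta_k\tilde\theta_k^\top(\hat\theta_k-\hat\theta_{k-1}) + 2\tilde\theta_k^\top(\theta_k-\theta_{k+1}) + R_k,
\end{equation*}
where $c_1>0$ is guaranteed by Assumption~\ref{A3} and by choosing $\mu$ small enough relative to $\inf_k\underline{g}_k^2$ so that the quadratic noise term $\alpha_k\|\varphi_k\|^2\le\mu$ cannot cancel the linear drift, and $R_k$ gathers strictly higher-order remainders. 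Summing from $0$ to $T-1$, using boundedness of $V_T$ on the compact $\mathcal{D}$, the decay $\sum_{k=1}^T\beta_k=O(T^{1-\delta})=o(T)$, the time-averaged drift bound $\sum\|\theta_k-\theta_{k+1}\|\le \xi T+o(T)$ from Assumption~\ref{A1}, and the lower bound $\alpha_k\ge \mu/(1+\sup\|\varphi_k\|^2)>0$, one extracts the asymptotic bound. Combining with the noise estimate and the subadditivity $\sqrt{a+b}\le\sqrt a+\sqrt b$ gives $T^{-1}\sum|y_{k+1}-\hat y_{k+1}|\le \mathcal{C}(\sigma_w+\xi^{1/2})$, and division by $c$ finishes the proof.

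The main obstacle is this Lyapunov step: the momentum increment $\beta_k(\hat\theta_k-\hat\theta_{k-1})$ destroys the clean martingale drift one usually exploits in LMS analysis and injects a persistent bias that must be shown to contribute only $o(T)$, which requires carefully exploiting both $\beta_k=k^{-\delta}\to 0$ and the boundedness of $\mathcal{D}$. In parallel, the step-size $\mu$ must be tuned to keep the expected drift coefficient $c_1$ strictly positive while keeping the quadratic term $\alpha_k^2\|\varphi_k\|^2$ negligible; these two competing requirements together fix the admissible range of $\mu$ and constitute the only delicate bookkeeping in the argument.
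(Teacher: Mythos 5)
Your proposal is correct and follows essentially the same route as the paper: reduce $1-\mathrm{RA}(T)$ to the averaged absolute error via $y_{k+1}\ge c$ and Cauchy--Schwarz, split the error into the martingale noise $w_{k+1}$ and the $G$-difference controlled by the mean value theorem, and bound $\sum_k(\varphi_k^{\top}\tilde{\theta}_k)^2$ through a Lyapunov analysis of $\|\tilde{\theta}_k\|^2$ using non-expansiveness of $\pi_{\mathcal{D}}$, the estimates $\sum_k\beta_k=o(T)$ together with compactness of $\mathcal{D}$ for the momentum term, the time-averaged variation bound on $\theta_k$, and the martingale strong law---precisely the content of Lemma \ref{lemma2} and the summation argument in the paper's proof. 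Your only real deviation, the smallness condition on $\mu$, is unnecessary: since $\mu\|\varphi_k\|^2/(1+\|\varphi_k\|^2)\le 1$ and $G_k'\le 1$, the quadratic term is absorbed by half of the negative drift for every $\mu\in(0,1]$ (as in \eqref{saturated important}), so no restriction on $\mu$ beyond the algorithm's own range is needed.
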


\begin{remark}
We remark that the inequality \eqref{bound error upper} does not require any stationarity and independence assumptions on the data, and the lower bound of $RA(T)$ only relies on the variance bound $\sigma_w$ of the martingale difference sequence $\{w_k\}$ and parameter variation bound $\xi$. It is evident that the closer the values of $\sigma_w$ and $\xi$ are to 0, the closer the metric $RA(T)$ of the adaptive predictor $\hat{y}_{k}$ is to 1. 
\end{remark}

After establishing the lower bound to the accuracy of the adaptive predictor, a natural question is whether the prediction accuracy can reach 1 when applied to real data? If not,  what is the maximum achievable accuracy for this dataset? Regarding the first question, Proposition \ref{Claim 1} in Appendix \ref{Proof of main theorems} demonstrates that, if the noise variances are uniformly bounded below by a positive constant, then \eqref{RAmetric} cannot attain a value of 1, regardless of the predictors used. As for the second question, we provide a theorem that provides a best possible upper bound for $RA(T)$ achievable by the optimal predictor when the parameter is known.

\begin{theorem}\label{theorem new}
Under the Assumption~\ref{bouned noise} and conditions of Theorem~\ref{main theorem}, a best possible upper bound for $RA(T)$ based on the best predictor $\hat{y}_{k+1}^* = S_k(\varphi_{k}^{\top}\theta_{k})$ in the ideal case of known parameter $\theta_k$, can be computed as:
\begin{equation}
    \limsup_{T \rightarrow \infty} RA^*(T) = \limsup_{T \rightarrow \infty} \left(1 - \frac{1}{T} \sum_{k=1}^T \frac{\sigma_{sk}}{y_{k+1}} \right), \quad \text{a.s.}
\end{equation}
where $RA^*(T) = 1- \frac{1}{T}\sum_{k=1}^T \frac{\left|y_{k+1} - \hat{y}^*_{k+1}\right|}{y_{k+1}}$ is the relative accuracy of the best predictor $\hat{y}_{k+1}^*$, $\sigma_{k} = \mathbb{E}_{k}\left[|S_k(\varphi_{k}^{\top}\theta_{k} + \varepsilon_{k+1})-S_k(\varphi_{k}^{\top}\theta_{k} )|\right]$. 
\end{theorem}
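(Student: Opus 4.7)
The plan is to reduce the claim to a strong law of large numbers (SLLN) for a bounded martingale difference (MD) sequence, exploiting the explicit form of the residual when the parameter is known. First, I would write the residual: since $\hat{y}_{k+1}^* = S_k(\varphi_k^{\top}\theta_k)$ and the SMS model gives $y_{k+1} = S_k(\varphi_k^{\top}\theta_k + \varepsilon_{k+1})$, one has
\begin{equation*}
e_{k+1}^* \;:=\; y_{k+1} - \hat{y}_{k+1}^* \;=\; S_k(\varphi_k^{\top}\theta_k + \varepsilon_{k+1}) - S_k(\varphi_k^{\top}\theta_k).
\end{equation*}
By the very definition of $\sigma_{sk}$ in the statement, $\mathbb{E}_k[|e_{k+1}^*|] = \sigma_{sk}$, so $M_k := |e_{k+1}^*| - \sigma_{sk}$ is a martingale difference with respect to $\{\mathcal{F}_{k+1}\}$. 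It is moreover uniformly bounded: $S_k$ takes values in $[L_k, U_k] \subseteq [c, M]$ by Assumption \ref{A2}, so $|M_k| \le 2(M-c)$.

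Having identified this MD, a standard bounded-MD SLLN (for instance Chow's theorem applied to $\sum_k M_k$) yields $T^{-1}\sum_{k=1}^T M_k \to 0$ a.s. The remaining task is to transfer this to the weighted averages $T^{-1}\sum_k M_k/y_{k+1}$ that actually govern $RA^*(T) - (1 - T^{-1}\sum_k \sigma_{sk}/y_{k+1})$. For this I would split the random weight as $1/y_{k+1} = \bar\psi_k + (1/y_{k+1} - \bar\psi_k)$ with $\bar\psi_k := \mathbb{E}_k[1/y_{k+1}] \in [1/M, 1/c]$. The contribution $M_k \bar\psi_k$ has $\mathcal{F}_k$-measurable weights on a bounded MD, so its Cesàro average vanishes a.s.\ by Chow's theorem; the remaining cross term $M_k (1/y_{k+1} - \bar\psi_k)$ is handled by subtracting and adding its own $\mathcal{F}_k$-conditional expectation, turning one piece into a new bounded MD (again killed by SLLN) and leaving a conditional-covariance term that one controls using the saturation structure of $S_k$ together with the uniform positivity of the noise density (Assumption \ref{bouned noise}) to bound joint fluctuations of $y_{k+1}$ and $|e_{k+1}^*|$.

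Combining these pieces gives $T^{-1}\sum_{k=1}^T (|e_{k+1}^*| - \sigma_{sk})/y_{k+1} \to 0$ a.s., which is exactly the equality of $\limsup$'s asserted. The main obstacle is the last step: because $y_{k+1}$ depends on the same noise $\varepsilon_{k+1}$ that drives $|e_{k+1}^*|$, the weight $1/y_{k+1}$ is correlated with $M_k$, and a naive weighted-MD argument does not close. All of the genuine work goes into showing that this correlation averages out over time, a step that leans on the saturation structure of $S_k$, the lower bound $y_{k+1} \ge c$ from Assumption \ref{A2}, and the density lower bound of Assumption \ref{bouned noise}. The rest is straightforward bookkeeping around the definition of $\sigma_{sk}$ and the bounded-MD SLLN.
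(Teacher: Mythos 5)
Your opening step coincides with the paper's own route: you identify $|e^{*}_{k+1}|-\sigma_{sk}$ as a bounded martingale difference and invoke a martingale SLLN, which is exactly Lemma~\ref{Claim 2} proved via Lemma~\ref{martingconvergence} (the paper additionally establishes the $L^1$-optimality of $S_k(\varphi_k^{\top}\theta_k)$ by citing a conditional-median lemma, a point you simply take as given). The genuine gap is in your last step, which you yourself flag as ``the main obstacle'': you never actually control the conditional-covariance term $\mathbb{E}_k\bigl[(|e^{*}_{k+1}|-\sigma_{sk})(1/y_{k+1}-\mathbb{E}_k[1/y_{k+1}])\bigr]=\mathrm{Cov}_k\bigl(|e^{*}_{k+1}|,\,1/y_{k+1}\bigr)$; you only assert it can be handled ``using the saturation structure and the density lower bound.'' Nothing in Assumptions~\ref{A1}--\ref{bouned noise} makes this covariance vanish, nor makes its Ces\`aro average vanish: given $\mathcal{F}_k$, both $|e^{*}_{k+1}|$ and $1/y_{k+1}$ are functions of the same noise $\varepsilon_{k+1}$, and for, say, Gaussian noise with the saturation inactive this conditional covariance is a fixed nonzero quantity at every $k$, so its time average converges to that nonzero value rather than to zero. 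A rigorous version of your reduction therefore only yields $\limsup_{T}\bigl(1-\frac{1}{T}\sum_{k}\mathbb{E}_k[|e^{*}_{k+1}|/y_{k+1}]\bigr)$, not the asserted expression with $\sigma_{sk}/y_{k+1}$.

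Note that the paper does not overcome this difficulty either; it sidesteps it by an explicit convention: in the proof of Theorem~\ref{theorem new} the denominator $y_{k+1}$ of the RA metric is ``regarded as a known and fixed positive weighting coefficient,'' so the unweighted martingale argument of Lemma~\ref{Claim 2} transfers with weights treated as deterministic and no cross term ever appears. Your diagnosis of the correlation between the weight $1/y_{k+1}$ and the residual is thus sound---indeed sharper than the paper's treatment---but your proposed cure (that the correlation ``averages out over time'') is unsupported and, under the stated assumptions, false in general. To close the argument you must either adopt the paper's fixed-weight reading of the denominator or restate the right-hand side with $\mathbb{E}_k[|e^{*}_{k+1}|/y_{k+1}]$ in place of $\sigma_{sk}/y_{k+1}$.
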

The proof of Theorem \ref{theorem new} is provided in Appendix \ref{Proof of main theorems}, where an illustrative example that demonstrates how to compute $\sigma_{k}$ is also provided. Obviously, based on Theorem \ref{theorem new}, one can compute the theoretically best achievable prediction accuracy of $RA^*(T)$ from the available data.

\section{Real-World Dataset Construction and Performance Evaluation} \label{Experiment}

In this section, we firstly introduce the Chinese Intentional Bodily Harm (CIBH) dataset. Secondly, we present the evaluation metrics commonly employed in legal contexts and theoretically demonstrate the relationship between relative accuracy's upper bound and sample noise. Finally, we evaluate the performance of our proposed Saturated Mechanism Sentencing (SMS) model and the Momentum LMS algorithm on this dataset.

\subsection{Dataset Overview}\label{Dataset Overview}

\begin{figure*}[h]
    \centering
    \includegraphics[width=1\linewidth]{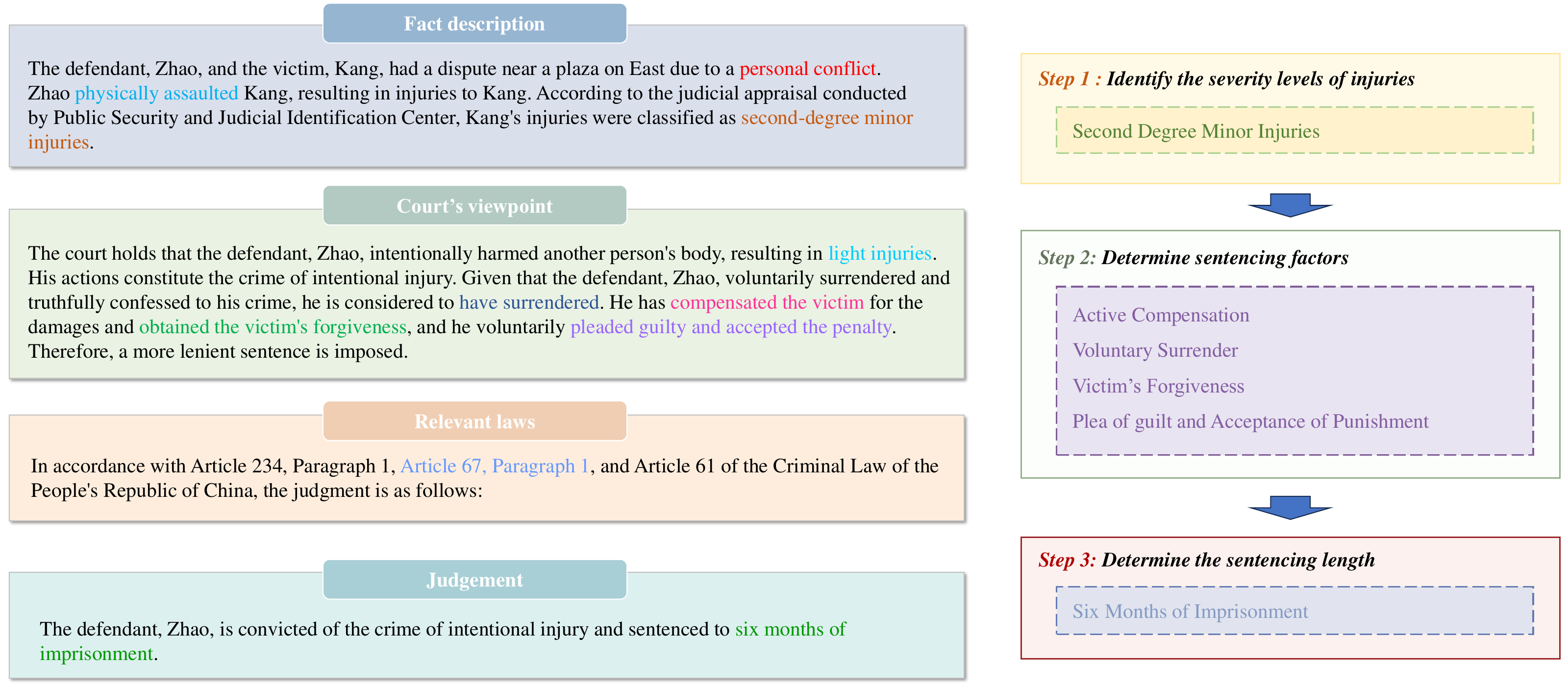}
    \caption{Extract features from different parts of the judgment document}
    \label{fig:-labertfeel11}
\end{figure*}

The CIBH dataset was constructed through a multi-stage process, as illustrated in Figure \ref{fig:-labertfeel11}. Initially, we acquired 4,305 original judgment documents from China Judgments Online\footnote{\url{https://wenshu.court.gov.cn/}} as our primary data source. Subsequently, leveraging relevant empirical studies, we extracted a total of 82 key features from these documents. These features underwent rigorous evaluation by legal professionals, confirming their significant relevance to sentencing prediction in intentional injury cases. The structured extraction of these features was performed using a combination of regular expression matching and large language model (LLM) processing techniques.

To ensure the reliability of our extraction process, we conducted manual verification on a dedicated test set, achieving a precision rate of 98\%. Future research will focus on exploring more sophisticated extraction methodologies to further improve this accuracy. In the resulting CIBH dataset, most features are represented in a binary format (0 or 1), where a value of 1 signifies the presence of the specific behavior or circumstance associated with that feature. For instance, a value of 1 for the "self-defense" feature indicates that the defendant's actions were legally classified as self-defense. Further comprehensive details about the CIBH dataset can be found in Appendix \ref{dataset}.

\subsection{Validation of Theoretical Optimal Accuracy}

Drawing upon the CIBH dataset, we independently modeled the distribution of the noise term $\varepsilon_k$ for both minor and serious injury cases. As depicted in Figures \ref{fig:a} and \ref{fig:b}, the noise associated with minor injuries was found to conform to a normal distribution, specifically $\mathrm{N}(0, 11.70)$. In contrast, the noise distribution for serious injury cases was characterized as $\mathrm{N}(0, 84.13)$. In accordance with Theorem \ref{theorem new}, and predicated on the assumption of known true parameter values, the upper bound of the best achievable $RA^*(T)$ was calculated to be 83.61\% for minor injury cases and 95.13\% for serious injury cases within the CIBH dataset.

\begin{figure}[H]
  \centering
  \begin{minipage}[b]{0.49\linewidth} 
    \includegraphics[width=\linewidth]{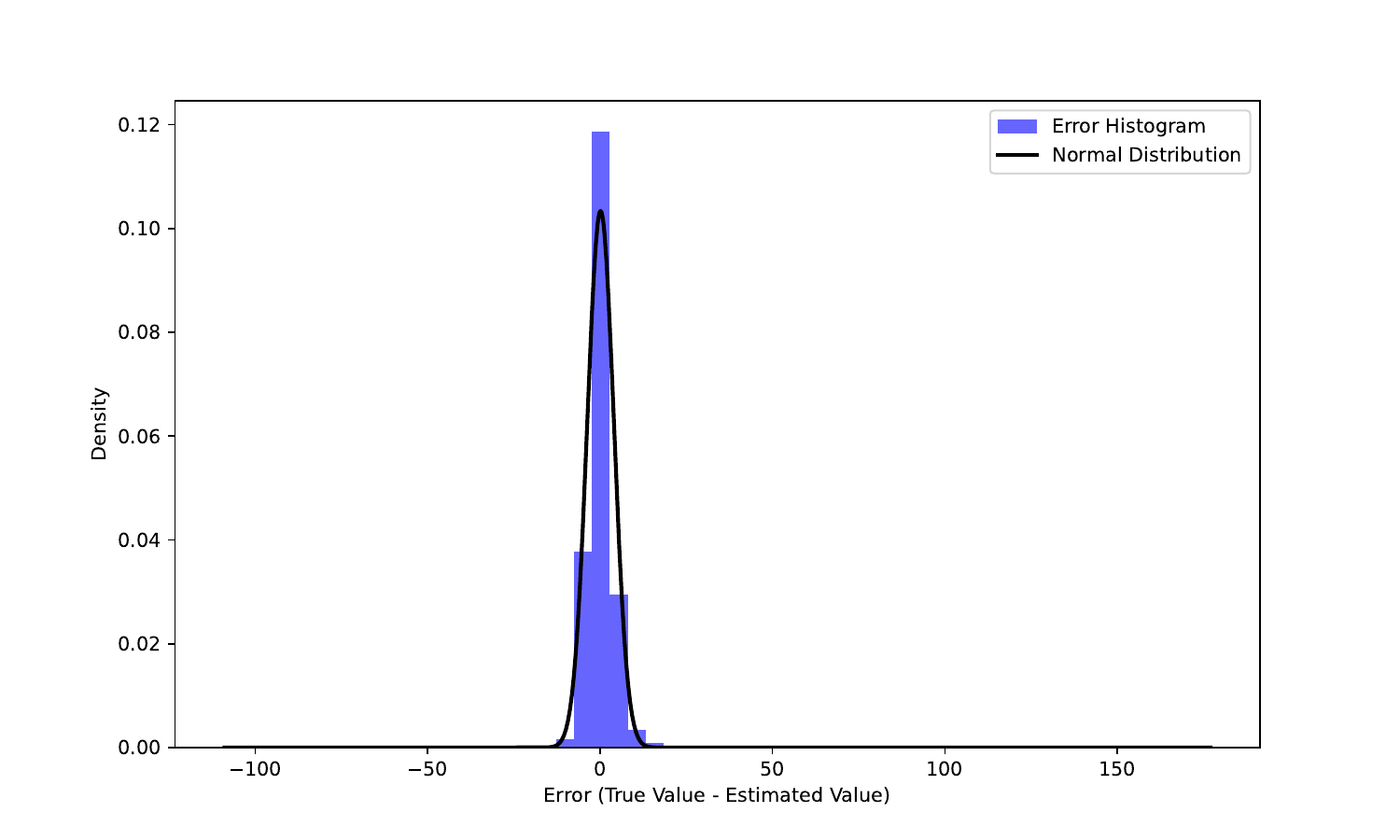}
    \caption{minor variance}
    \label{fig:a}
  \end{minipage}
  \hfill
  \begin{minipage}[b]{0.49\linewidth} 
    \includegraphics[width=\linewidth]{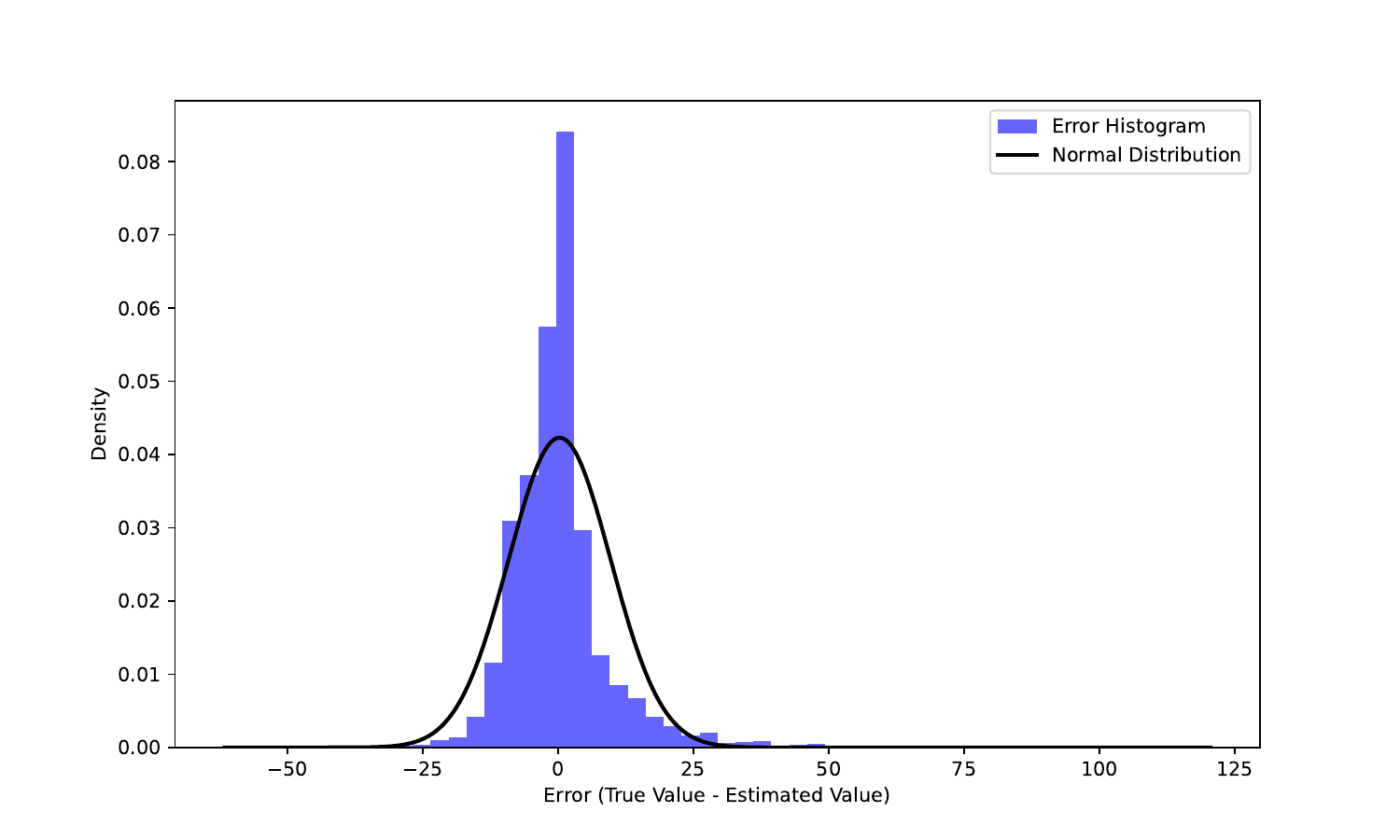}
    \caption{serious variance}
    \label{fig:b}
  \end{minipage}
\end{figure}

\subsection{Effectiveness Evaluation}\label{Effectiveness Evaluation}

\begin{table}[htbp]
\centering
\caption{Performance Comparison of RA(T)}
\label{tab:model_comparison11}
\begin{tabular}{lccccccc}
\toprule
Metric & LR & RF & MLP & LGBM & XGB & CB & SMS \\
\midrule
Serious & 83.25 & 86.83 & 85.67 & 86.41 & 87.05 & 87.36 & \textbf{91.34} \\
Minors & 73.15 & 75.01 & 74.25 & 76.19 & 76.23 & 76.16 & \textbf{77.53} \\
\bottomrule
\end{tabular}
\end{table}

To validate the efficacy of our novel three-stage mechanism model for sentencing prediction, we conducted a comprehensive comparative analysis against state-of-the-art baseline predictors. These baselines included Linear Regression (LR), Random Forest (RF), Multi-Layer Perceptron (MLP), LightGBM (LGBM), XGBoost (XGB), and CatBoost (CB). 

The dataset was stratified into serious and minor injury groups, and within each stratum, it was further partitioned into training and testing sets using an 8:2 ratio. Each baseline model underwent hyperparameter optimization on both injury severity groups via RandomizedSearchCV. The negative mean absolute percentage error served as the scoring metric during this optimization process, aligning with the Relative Accuracy (RA) evaluation metric employed in this study. We randomly sampled 30 parameter combinations from a predefined search space and identified the optimal configuration for each baseline. To rigorously assess the stability and generalization capability of these models, we further employed five random seeds evaluation to estimate error bars, using their
optimal hyperparameter configurations. 
For our proposed MLMS algorithm, the parameter settings were as follows: for the serious injury group, $\sigma = 9.17, \mu = 10, \beta = 0.9$; and for the minor injury group, $\sigma = 3.42, \mu = 1, \beta = 0.5$. For serious injury cases, the saturation upper ($U_k$) and lower ($L_k$) bounds were set at 120 and 36 months, respectively. In minor injury cases, these bounds were 36 and 6 months. All experiments were conducted on a computer equipped with an RTX 3070 GPU. 

As evidenced in Table \ref{tab:model_comparison11}, our proposed model demonstrably outperformed all baseline models, achieving improvements of 3.98\% for serious injuries and 1.3\% for minor injuries, thereby underscoring its superior effectiveness.
Moreover, the adaptive prediction accuracies of our methods for serious and minor injury cases are more close to the theoretical upper bounds of 95.13\% and  83.61\%, respectively. Further details regarding the experiment can be found in Appendix \ref{hyperanbhbu}.

\section{Conclusion} \label{conclusion}

This research addresses the practical hurdles in judicial sentencing prediction, where legal interpretability and high accuracy are paramount despite the nonstationarity and nonindependence of the real judicial data. To ensure interpretability, we introduced a refined Sentencing Mechanism Sentencing (SMS) model rooted in the judicial logic of Chinese Criminal Law. For high adaptive prediction accuracy, we developed an adaptive MLMS algorithm based on the SMS model. To circumvent the restrictive assumptions of stationarity and independence often found in theoretical analyses, our work leverages both Lyapunov function methods and martingale convergence theory, techniques previously applied in the area of stochastic adaptive control. Departing from prevalent empirical studies in judicial sentencing, we rigorously established a theoretical lower bound for our adaptive predictor's accuracy, alongside the optimal upper bound achievable with known parameters. Empirical validation using our constructed CIBH dataset demonstrates the superior performance of our adaptive predictor against standard baselines, achieving an accuracy level not far from the theoretical best. While our study centers on sentencing prediction for intentional bodily harm in China, the proposed modeling and algorithmic framework exhibits broader applicability to other crime types and even diverse application domains. Notably, our methodology for evaluating the theoretical bounds of a given prediction accuracy metric using real-world data holds significant potential for wider adoption.

\small
\bibliographystyle{IEEEtran}
\bibliography{example_paper}

\newpage
\appendix

\section{Proof of Main Theorems} \label{Proof of main theorems}
In this section, we complete the proof of the Theorems. We begin by introducing the following notation.
\begin{equation}
    \psi_{k}=G_k\left(\varphi_k^{\top} \theta_k\right) -G_k\left(\varphi_k^{\top} \hat{\theta}_k\right).
\end{equation}
 
Since $w_{k+1}=y_{k+1}-G_k\left(\varphi_k^{\top} \theta_k\right)$, one can deduce that $\{ w_{k}, \mathcal{F}_k\}$ is a martingale difference sequence with Assumptions \ref{A1}--\ref{A3}. Moreover, $\mathbb{E}_k[|w_{k+1}|^{\alpha}] < \infty$ for any constant $\alpha\geq1$.

We also need the following lemmas.

\begin{lemma}(\cite{cheney2001analysis}) \label{projections}
The projection operator introduced in Definition \ref{projection} satisfies the following property.
\begin{equation}
    \| \pi_D(x) - \pi_{D}(y)\| \leq \|x-y\|, ~\forall x,y \in\mathbb{R}.
\end{equation}
\end{lemma}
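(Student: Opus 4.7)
The plan is to derive the non-expansiveness from the standard variational (``obtuse angle'') characterization of the Euclidean projection onto a closed convex set. Since $\mathcal{D}$ is a convex compact subset of $\mathbb{R}^{p}$, for each $x \in \mathbb{R}^{p}$ the function $y \mapsto \|x - y\|^{2}$ is strictly convex and coercive on $\mathcal{D}$, so the minimizer $\pi_{\mathcal{D}}(x)$ is unique and, by a first-order optimality argument on $\mathcal{D}$, is characterized by the inequality $\langle x - \pi_{\mathcal{D}}(x),\, v - \pi_{\mathcal{D}}(x) \rangle \leq 0$ for all $v \in \mathcal{D}$. This variational inequality is the only nontrivial input; the rest is pure algebra.

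The main steps I would carry out are the following. Fix arbitrary $x, y \in \mathbb{R}^{p}$ and set $u = \pi_{\mathcal{D}}(x)$, $w = \pi_{\mathcal{D}}(y)$. Apply the characterization at $x$ with test point $w \in \mathcal{D}$ to obtain $\langle x - u,\, w - u \rangle \leq 0$, and symmetrically at $y$ with test point $u \in \mathcal{D}$ to obtain $\langle y - w,\, u - w \rangle \leq 0$. Adding these two inequalities and regrouping the terms yields
\begin{equation*}
\|u - w\|^{2} \;\leq\; \langle x - y,\, u - w \rangle.
\end{equation*}
The Cauchy--Schwarz inequality applied to the right-hand side then gives $\|u - w\|^{2} \leq \|x - y\|\,\|u - w\|$, from which the non-expansive bound $\|\pi_{\mathcal{D}}(x) - \pi_{\mathcal{D}}(y)\| \leq \|x - y\|$ follows upon dividing by $\|u - w\|$ (and noting the case $u = w$ is trivial).

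There is no real obstacle here since the result is classical; the only delicate point worth verifying is the variational characterization itself, which should be justified briefly by considering, for any $v \in \mathcal{D}$ and $t \in (0,1]$, the convex combination $u + t(v - u) \in \mathcal{D}$ and using the minimality of $u$ to deduce $\|x - u\|^{2} \leq \|x - u - t(v - u)\|^{2}$; expanding and letting $t \downarrow 0$ gives the stated inequality. Since the author already cites \cite{cheney2001analysis} for this lemma, it is acceptable to simply invoke the variational characterization from that reference and then present the three-line derivation above as a self-contained proof.
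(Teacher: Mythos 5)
Your proof is correct and is the standard argument for non-expansiveness of the Euclidean projection onto a closed convex set; the paper itself gives no proof, simply citing \cite{cheney2001analysis}, and the variational-inequality derivation you present is precisely the classical one found there. The handling of the trivial case $u = w$ and the justification of the variational characterization via $u + t(v-u)$ are both sound, so nothing is missing.
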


\begin{lemma} \label{martingconvergence}(Martingale Convergence Theorem, \cite{chen1991identification})
Let \( \{w_k, \mathcal{F}_k, k\geq0\} \) be a martingale difference sequence and \( \{g_k, \mathcal{F}_k, k \geq0\} \) an adapted process. Suppose there exists \( \alpha \in (0, 2] \) such that
\[
\sup_{k \geq 0} \mathbb{E}\left[|w_{k+1}|^\alpha \mid \mathcal{F}_k\right] < \infty \quad \text{a.s.}
\]
Then for any \( \eta > 0 \), as \( k \to \infty \), the following almost sure bound holds:
\[
\sum_{i=0}^k g_i w_{i+1} = O\left(S_k(\alpha) \cdot \log^{1/\alpha + \eta}(S_k^\alpha(\alpha) + e)\right), \quad \text{a.s.}
\]
where
\[
S_k(\alpha) := \left( \sum_{i=0}^k |g_i|^\alpha \right)^{1/\alpha}.
\]
\end{lemma}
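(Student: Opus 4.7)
} The plan is to combine a truncation argument for the martingale differences with a classical almost-sure rate result for square-integrable martingale transforms. Set $M_n = \sum_{i=0}^n g_i w_{i+1}$. Since each $g_i$ is $\mathcal{F}_i$-measurable and $\mathbb{E}_i[w_{i+1}]=0$, the process $\{M_n,\mathcal{F}_{n+1}\}$ is itself a martingale, whose predictable quadratic variation (when finite) is $\langle M\rangle_n = \sum_{i=0}^n g_i^2\,\mathbb{E}_i[w_{i+1}^2]$. Our target is an almost-sure bound of the form $M_n = O\bigl(S_n(\alpha)\log^{1/\alpha+\eta}(S_n^\alpha(\alpha)+e)\bigr)$.

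First I would dispatch the case $\alpha = 2$, which is the baseline. Under $\sup_k\mathbb{E}_k[w_{k+1}^2]<\infty$, the Stout/Chow strong law for martingales gives $M_n = O\bigl(\langle M\rangle_n^{1/2}\log^{1/2+\eta}(\langle M\rangle_n+e)\bigr)$ a.s.; since $\langle M\rangle_n \leq C\sum_{i=0}^n g_i^2 = C\,S_n^2(2)$, the stated rate follows. For general $\alpha\in(0,2)$, I would truncate: define
\begin{equation}
w_{i+1}' = w_{i+1}\mathbf{1}_{\{|w_{i+1}|\leq b_i\}} - \mathbb{E}_i\bigl[w_{i+1}\mathbf{1}_{\{|w_{i+1}|\leq b_i\}}\bigr],
\end{equation}
with threshold $b_i$ growing as a suitable power of $S_i(\alpha)$. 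Then $\{w_{i+1}'\}$ is a bounded martingale difference sequence whose conditional second moment is controlled by $b_i^{2-\alpha}\mathbb{E}_i[|w_{i+1}|^\alpha]$, so the $\alpha=2$ result applies to $\sum g_i w_{i+1}'$. For the residual, the conditional Markov inequality $\mathbb{P}_i(|w_{i+1}|>b_i)\leq b_i^{-\alpha}\mathbb{E}_i[|w_{i+1}|^\alpha]$ together with a conditional Borel--Cantelli argument shows that only finitely many truncation events occur and that $\sum g_i(w_{i+1}-w_{i+1}')$ contributes a term of lower order than $S_n(\alpha)$.

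The main obstacle will be calibrating the threshold $b_i$ so that both the truncated martingale piece and the tail piece yield the same rate $O\bigl(S_n(\alpha)\log^{1/\alpha+\eta}(S_n^\alpha(\alpha)+e)\bigr)$, while accommodating the fact that $S_n(\alpha)$ is itself a random, possibly non-monotonic quantity whose growth rate is unknown a priori. I would handle this by localizing: on the event $\{S_n(\alpha)\in[2^j,2^{j+1})\}$, choose $b_i$ depending on the dyadic level $j$, apply the martingale rate inside each level, and take a union bound over $j$. The extra logarithmic cost of this union bound is absorbed into the $\log^{1/\alpha+\eta}$ factor by choosing $\eta$ slightly larger than needed. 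A final Kronecker-type step converts the moment/maximal inequalities on $M_n$ into the claimed almost-sure $O$-estimate, completing the proof.
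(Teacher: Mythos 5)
The paper does not prove this lemma at all: it is quoted verbatim, with citation, from Chen and Guo's book on identification and stochastic adaptive control, so there is no in-paper argument to compare against. Measured against the standard proof of this estimate, your truncation-plus-dyadic-localization plan is a recognizable alternative route, but as calibrated it does not close.

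The concrete gap is your choice of truncation level. You truncate $w_{i+1}$ alone at a threshold $b_i$ that is ``a suitable power of $S_i(\alpha)$,'' and you need simultaneously (i) $\sum_i \mathbb{P}_i(|w_{i+1}|>b_i) \leq C\sum_i b_i^{-\alpha} < \infty$ for the conditional Borel--Cantelli step, and (ii) $\sum_i g_i^2\, b_i^{2-\alpha} = O\bigl(S_n^2(\alpha)\bigr)$ (up to logs) for the quadratic variation of the truncated martingale. No power $b_i = S_i(\alpha)^{\gamma}$ satisfies both: already for $g_i\equiv 1$ (so $S_i^\alpha(\alpha)=i+1$), (i) forces $\gamma>1/\alpha$... while (ii) then fails, and conversely. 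The correct calibration truncates the \emph{product} $g_i w_{i+1}$ at the target rate, i.e.\ $b_i \propto S_i(\alpha)\,\log^{1/\alpha+\eta}\!\bigl(S_i^\alpha(\alpha)+e\bigr)/|g_i|$, and the engine that makes both (i) and (ii) converge is the Abel--Dini series test: writing $a_i = S_i^\alpha(\alpha)+e$, one has $\sum_i (a_i-a_{i-1})/\bigl(a_i \log^{1+\alpha\eta} a_i\bigr) < \infty$ because $\{a_i\}$ is nondecreasing. This summability fact is the missing idea; without it neither your Borel--Cantelli step nor your variance bound goes through, and the dyadic localization over levels of $S_n(\alpha)$ does not repair the miscalibration (it also introduces a measurability circularity, since the level of $S_n$ is not $\mathcal{F}_i$-measurable at time $i$). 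A secondary gap: for $\alpha\in(0,1)$ your bound on the compensator of the truncated differences, $\mathbb{E}_i\bigl[|w_{i+1}|\mathbf{1}_{\{|w_{i+1}|>b_i\}}\bigr] \leq b_i^{1-\alpha}\,\mathbb{E}_i[|w_{i+1}|^\alpha]$, grows with $b_i$, so the recentcentering term is not of lower order as claimed. The standard proof avoids all of this by normalizing $\zeta_i = g_i w_{i+1}/\bigl(S_i(\alpha)\log^{1/\alpha+\eta}(S_i^\alpha(\alpha)+e)\bigr)$, verifying $\sum_i \mathbb{E}[|\zeta_i|^\alpha\mid\mathcal{F}_i]<\infty$ a.s.\ via Abel--Dini, invoking Chow's local convergence theorem for martingales (for $\alpha\in[1,2]$; absolute convergence for $\alpha\in(0,1)$), and finishing with Kronecker's lemma; if you want to salvage your route, import that normalization into your truncation threshold.
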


\begin{lemma}\label{lemma2}
If Assumptions \ref{A1}--\ref{A3} hold,
let $\beta_k=\frac{1}{k^\delta}, \delta \in(0,1)$.
The estimation error satisfies the following property:
\begin{equation}
 \left\|\tilde{\theta}_{k+1}\right\|^2 \leq \left\|\tilde{\theta}_k\right\|^2-\mu \underline{g}^2_k \alpha_k   -\frac{2 \mu \varphi_k^{\top} \tilde{\theta}_k G_k^{\prime}\left(\varphi_k^{\top} \hat{\theta}_k\right) m_k}{1+\left\|\varphi_k\right\|^2} w_{k+1}+O\left(\mu w_k^2+\beta_k +\mu\beta_k|w_{k+1}|+\xi_{k+1}\right)   
\end{equation}
where
$$
\tilde{\theta}_k= \theta_k- \hat{\theta}_k, \quad\alpha_k \triangleq \frac{\left\|\varphi_k^{\top} \tilde{\theta}_k\right\|^2}{1+\left\|\varphi_k\right\|^2}, \quad \xi_{k+1} \triangleq\left\|\theta_{k+1}-\theta_k\right\|.
$$
\end{lemma}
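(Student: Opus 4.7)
The plan is to control $\|\tilde\theta_{k+1}\|^2$ by a Lyapunov-style expansion of the projected momentum update. Since $\theta_{k+1}\in\mathcal{D}$, we have $\theta_{k+1}=\pi_{\mathcal{D}}(\theta_{k+1})$, and the non-expansiveness of the projection (Lemma~\ref{projections}) gives
\begin{equation}
\|\tilde\theta_{k+1}\|^2 \leq \Bigl\|\theta_{k+1}-\hat\theta_k-\tfrac{\mu}{1+\|\varphi_k\|^2}\, e_{k+1} G_k'(\varphi_k^\top\hat\theta_k)\varphi_k-\beta_k(\hat\theta_k-\hat\theta_{k-1})\Bigr\|^2,
\end{equation}
where $e_{k+1}=y_{k+1}-G_k(\varphi_k^\top\hat\theta_k)$. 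Writing $\theta_{k+1}-\hat\theta_k=\tilde\theta_k+(\theta_{k+1}-\theta_k)$ and expanding the square yields a leading term $\|\tilde\theta_k\|^2$, a cross term driven by $e_{k+1}$, a momentum cross term proportional to $\beta_k$, a drift cross term of size $O(\xi_{k+1})$, and three quadratic remainders that need individual control.

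The crucial algebraic step is the decomposition $e_{k+1}=w_{k+1}+\psi_k$ with $\psi_k=G_k(\varphi_k^\top\theta_k)-G_k(\varphi_k^\top\hat\theta_k)$, which splits the innovation into a martingale-difference piece and a signal piece. By the mean value theorem, $\psi_k=G_k'(\zeta_k)\varphi_k^\top\tilde\theta_k$ for some $\zeta_k$ between $\varphi_k^\top\theta_k$ and $\varphi_k^\top\hat\theta_k$; since both endpoints lie in $[-C_k,C_k]\subset[-C,C]$ by Assumption~\ref{A1} and the projection, Assumption~\ref{A3} ensures $G_k'(\zeta_k)G_k'(\varphi_k^\top\hat\theta_k)\geq \underline{g}_k^2$. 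Consequently the $\psi_k$-part of the innovation cross term contributes at most $-\frac{2\mu (\varphi_k^\top\tilde\theta_k)^2}{1+\|\varphi_k\|^2}\underline{g}_k^2=-2\mu\underline{g}_k^2\alpha_k$ in the lemma's notation, which supplies the negative contraction. The remaining noise-linear part is deliberately kept as the explicit term $-\frac{2\mu\varphi_k^\top\tilde\theta_k G_k'(\varphi_k^\top\hat\theta_k)}{1+\|\varphi_k\|^2}w_{k+1}$ so that the martingale convergence result (Lemma~\ref{martingconvergence}) can later be applied to its cumulative sum.

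Every remaining contribution is then folded into the $O$-remainder. Using the boundedness of $\varphi_k$, the inequality $G_k'\leq 1$, and the compactness of $\mathcal{D}$: the squared innovation term has size $O(\mu w_{k+1}^2+\mu\alpha_k)$; the squared drift term is $O(\xi_{k+1}^2)=O(\xi_{k+1})$ because $\xi_{k+1}$ is itself a bounded increment; the squared momentum term is $\beta_k^2\|\hat\theta_k-\hat\theta_{k-1}\|^2=O(\beta_k)$; the momentum--innovation cross term is $O(\mu\beta_k|w_{k+1}|+\mu\beta_k)$ via Cauchy--Schwarz together with the bound $\|\hat\theta_k-\hat\theta_{k-1}\|\leq 2\sqrt{p}\,L$; and the momentum--drift and innovation--drift cross terms are absorbed into $O(\beta_k)+O(\xi_{k+1})$ by the same arguments. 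Collecting everything reproduces the stated bound (with $\mu w_{k+1}^2$ in place of what appears to be the typographical $\mu w_k^2$).

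The main obstacle is bookkeeping rather than conceptual: one must retain the single term linear in $w_{k+1}$ in explicit form, so that its partial sums can later be controlled via Lemma~\ref{martingconvergence}, while simultaneously absorbing every other noise-containing cross product into remainders whose summability is compatible with $\beta_k=k^{-\delta}$ and the time-averaged drift bound \eqref{bountheta}. A related subtlety is verifying that the intermediate point $\zeta_k$ produced by the mean value theorem always lies in the strict positivity range of Assumption~\ref{A3}; this uses $\sup_{\theta\in\mathcal{D}}|\varphi_k^\top\theta|\leq C_k<C$ together with $\hat\theta_k\in\mathcal{D}$, which is enforced by the projection at every iteration.
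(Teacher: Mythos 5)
Your overall route is the same as the paper's: non-expansiveness of $\pi_{\mathcal{D}}$, the decomposition $y_{k+1}-G_k(\varphi_k^{\top}\hat\theta_k)=\psi_k+w_{k+1}$ with $\psi_k=G_k'(\varphi_k^{\top}\bar\theta_k)\,\varphi_k^{\top}\tilde\theta_k$ from the mean value theorem, keeping the $w_{k+1}$-linear term explicit for the later martingale argument, and controlling the momentum and drift contributions through the compactness of $\mathcal{D}$. However, two steps as written would not deliver the stated inequality. First, you lower-bound the signal cross term by $-2\mu\underline{g}_k^2\alpha_k$ and then fold the squared-innovation signal part into a remainder of size $O(\mu\alpha_k)$. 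The lemma's remainder contains no $\alpha_k$ term, and a generic positive $O(\mu\alpha_k)$ cannot be absorbed by $-2\mu\underline{g}_k^2\alpha_k$: its natural coefficient is (after using $\mu\le 1$ and $\|\varphi_k\|^2/(1+\|\varphi_k\|^2)\le 1$) the product $G_k'(\varphi_k^{\top}\hat\theta_k)G_k'(\varphi_k^{\top}\bar\theta_k)$, which may exceed $\underline{g}_k^2$. The paper keeps this exact product in both the cross term (coefficient $-2\mu$) and the quadratic term (coefficient at most $+\mu$ of the same quantity), cancels them first to obtain $-\mu\,G_k'(\varphi_k^{\top}\hat\theta_k)G_k'(\varphi_k^{\top}\bar\theta_k)(\varphi_k^{\top}\tilde\theta_k)^2/(1+\|\varphi_k\|^2)$, and only then invokes the lower bound $\underline{g}_k^2$; your order of operations (replace by $\underline{g}_k^2$ first, absorb the quadratic later) destroys this cancellation and hence the contraction $-\mu\underline{g}_k^2\alpha_k$ that Theorem \ref{main theorem} relies on to bound $\sum_k\alpha_k$.

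Second, by inserting the drift inside the squared norm via $\theta_{k+1}-\hat\theta_k=\tilde\theta_k+(\theta_{k+1}-\theta_k)$, you create an innovation--drift cross term that is only $O\bigl(\xi_{k+1}(1+|w_{k+1}|)\bigr)$; since $w_{k+1}$ is not uniformly bounded, the piece $\xi_{k+1}|w_{k+1}|$ is not covered by the stated $O(\xi_{k+1})$ remainder. The paper avoids this by first bounding $\|\theta_k-\hat\theta_{k+1}\|^2$ (no drift inside the projection step) and only afterwards writing $\|\tilde\theta_{k+1}\|^2=\|\theta_k-\hat\theta_{k+1}\|^2+O(\xi_{k+1})+\xi_{k+1}^2$, where the drift cross term carries a deterministic constant because $\theta_k$ and $\hat\theta_{k+1}$ both lie in the compact set $\mathcal{D}$. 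Both issues are repairable, and the rest of your bookkeeping (momentum terms via $\|\hat\theta_k-\hat\theta_{k-1}\|\le 2\sqrt{p}\,L$, the location of the mean-value point within $[-C,C]$, the $w_k^2$ versus $w_{k+1}^2$ typo) matches the paper, but as written the sketch does not establish the lemma's inequality.
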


\begin{proof}
Building on Lemma \ref{projections}, Algorithm \ref{algorithm1}, and the methodology proposed in \cite{guo2020tima}, we consider
\begin{equation}\label{normtheta}
\begin{aligned} 
& \left\|\theta_k-\hat{\theta}_{k+1}\right\|^2 \\ 
\leq & \| \theta_k-\hat{\theta}_k-\frac{\mu  G_k^{\prime}\left(\varphi_k^{\top} \hat{\theta}_k\right)\varphi_k}{1+\left\|\varphi_k\right\|^2}\left(y_{k+1}-G_k\left(\varphi_k^{\top} \hat{\theta}_k\right)\right)-\beta_k\left(\hat{\theta}_k-\hat{\theta}_{k-1}\right) \|^2 \\ 
=&\left\|\theta_k-\hat{\theta}_k-\frac{\mu  G_k^{\prime}\left(\varphi_k^{\top} \hat{\theta}_k\right)\varphi_k}{1+\left\|\varphi_k\right\|^2}\left( \psi_k+w_{k+1}\right)-\beta_k\left(\hat{\theta}_k-\hat{\theta}_{k-1}\right)\right\|^2 \\ 
=& \left\|\left(I-\frac{\mu  G_k^{\prime}\left(\varphi_k^{\top} \hat{\theta}_k\right) G_k^{\prime}\left(\varphi_k^{\top} \bar{\theta}_k\right)}{1+\left\|\varphi_k\right\|^2}\varphi_k \varphi_k^{\top}\right) \tilde{\theta}_k-\frac{\mu  G_k^{\prime}\left(\varphi_k^{\top} \hat{\theta}_k\right)\varphi_k}{1+\left\|\varphi_k\right\|^2}w_{k+1}-\beta_k \left(\hat{\theta}_k-\hat{\theta}_{k-1}\right) \right\|^2 \\ 
\leq&  \left\|\left(I-\frac{\mu G_k^{\prime}\left(\varphi_k^{\top} \hat{\theta}_k\right) G_k^{\prime}\left(\varphi_k^{\top} \bar{\theta}_k\right)}{1+\left\|\varphi_k\right\|^2}  \varphi_k \varphi_k^{\top}\right) \tilde{\theta}_k-\frac{\mu  G_k^{\prime}\left(\varphi_k^{\top} \hat{\theta}_k\right)\varphi_k}{1+\left\|\varphi_k\right\|^2} w_{k+1}\right\|^2 
\\
&- 2 \beta_k\left[\tilde{\theta}_k^{\top}\left(I-\frac{\mu G_k^{\prime}\left(\varphi_k^{\top} \hat{\theta}_k\right) G_k^{\prime}\left(\varphi_k^{\top} \bar{\theta}_k\right)}{1+\left\|\varphi_k\right\|^2}  \varphi_k \varphi_k^{\top}\right) -\frac{\mu G_k^{\prime}\left(\varphi_k^{\top} \hat{\theta}_k\right) \varphi_k^{\top}}{1+\left\|\varphi_k\right\|^2}w_{k+1}\right]\left(\hat{\theta}_k-\hat{\theta}_{k-1}\right)\\
&+\beta_k^2 {\left\|\hat{\theta}_k-\hat{\theta}_{k-1}\right\|}^2,
\end{aligned}
\end{equation}
where we use the fact that $\psi_k = G_k^{\prime}\left(\varphi_k^{\top} \bar{\theta}_k\right)\varphi_k^{\top}\tilde{\theta}_k$, with $\varphi_k^{\top} \bar{\theta}_k$ lying between $\varphi_k^{\top}\theta_k$ and $\varphi_k^{\top}\hat{\theta}_k$ by the Mean Value Theorem.

Let us  analyze  the RHS of \eqref{normtheta} term by term.

For the  the first term, we can deduce that 
\begin{equation}\label{saturated important}
\begin{aligned}
&  \left\|\left(I-\frac{\mu G_k^{\prime}\left(\varphi_k^{\top} \hat{\theta}_k\right) G_k^{\prime}\left(\varphi_k^{\top} \bar{\theta}_k\right)}{1+\left\|\varphi_k\right\|^2}  \varphi_k \varphi_k^{\top}\right) \tilde{\theta}_k-\frac{\mu  G_k^{\prime}\left(\varphi_k^{\top} \hat{\theta}_k\right)\varphi_k}{1+\left\|\varphi_k\right\|^2} w_{k+1}\right\|^2\\
=& \left\|\tilde{\theta}_k\right\|^2-2 \frac{\mu G_k^{\prime}\left(\varphi_k^{\top} \hat{\theta}_k\right) G_k^{\prime}\left(\varphi_k^{\top} \bar{\theta}_k\right)\left(\varphi_k^{\top} \tilde{\theta}_k\right)^2}{1+\left\|\varphi_k\right\|^2}\\
&+\frac{\mu^2\left\|\varphi_k\right\|^2\left(G_k^{\prime}\left(\varphi_k^{\top} \hat{\theta}_k\right) G_k^{\prime}\left(\varphi_k^{\top} \bar{\theta}_k\right)\right)^2\left(\varphi_k^{\top} \tilde{\theta}_k\right)^2}{\left(1+\left\|\varphi_k\right\|^2\right)^2} \\
& -2 \tilde{\theta}_k^{\top}\left(1-\frac{\mu  G_k^{\prime}\left(\varphi_k^{\top} \hat{\theta}_k\right) G_k^{\prime}\left(\varphi_k^{\top} \bar{\theta}_k\right)}{1+\left\|\varphi_k\right\|^2}\varphi_k \varphi_k^{\top}\right) \frac{\mu G^{\prime} \left(\varphi_k^{\top} \hat{\theta}_k\right)\varphi_k}{1+\left\|\varphi_k\right\|^2} w_{k+1} \\
& +\frac{\mu^2 \|\varphi_k\|^2\left(G_k^{\prime}\left(\varphi_k^{T} \hat{\theta}_k\right)\right)^2}{\left(1+\left\|\varphi_k\right\|^2\right)^2}w_{k+1}^2 \\
\leq& \left\|\tilde{\theta}_k\right\|^2-\frac{\mu G_k^{\prime}\left(\varphi_k^{\top} \hat{\theta}_k\right) G_k^{\prime}\left(\varphi_k^{\top} \bar{\theta}_k\right)\left(\varphi_k^{\top} \tilde{\theta}_k\right)^2}{1+\left\|\varphi_k\right\|^2} \\
&  -\left[1-\frac{\mu \| \varphi_k\|^2 G^{\prime}\left(\varphi_k^{\top} \hat{\theta}_k\right) G^{\prime}\left(\varphi_k^{\top} \bar{\theta}_k\right)}{1+\left\|\varphi_k\right\|^2}\right]\frac{2 \mu \varphi_k^{\top} \tilde{\theta}_k G_k^{\prime}\left(\varphi_k^{\top} \hat{\theta}_k\right)}{1+\left\|\varphi_k\right\|^2}w_{k+1} \\
& +\frac{\mu^2}{1+\left\|\varphi_k\right\|^2}w_{k+1}^2 \\
& \leq \left\|\tilde{\theta}_k\right\|^2-\mu \underline{g}^2_k \alpha_k   -\frac{2 \mu \varphi_k^{\top} \tilde{\theta}_k G_k^{\prime}\left(\varphi_k^{\top} \hat{\theta}_k\right) m_k}{1+\left\|\varphi_k\right\|^2} w_{k+1}+\mu^2 w^2_{k+1},\\
\end{aligned}
\end{equation}

where $m_k = 1-\frac{\mu \| \varphi_k\|^2 G^{\prime}\left(\varphi_k^{\top} \hat{\theta}_k\right) G^{\prime}\left(\varphi_k^{\top} \bar{\theta}_k\right)}{1+\left\|\varphi_k\right\|^2}$. The first inequality holds because $\frac{\|\varphi_k\|^2}{1+\|\varphi_k\|^2} \leq 1$ and $0 < \mu \leq 1$. Moreover, since $G_k^{\prime}(\cdot) \leq 1$ almost surely, it follows that $0 \leq m_k \leq 1$ almost surely.

As for the second term on the RHS of \eqref{normtheta}, we have
\begin{equation} \label{sencondterm2}
\begin{aligned} 
&\beta_k\left[\tilde{\theta}_k^{\top}\left(I-\frac{\mu G_k^{\prime}\left(\varphi_k^{\top} \hat{\theta}_k\right) G_k^{\prime}\left(\varphi_k^{\top} \bar{\theta}_k\right)}{1+\left\|\varphi_k\right\|^2}  \varphi_k \varphi_k^{\top}\right) -\frac{\mu G_k^{\prime}\left(\varphi_k^{\top} \hat{\theta}_k\right) \varphi_k^{\top}}{1+\left\|\varphi_k\right\|^2}w_{k+1}\right]\left(\hat{\theta}_k-\hat{\theta}_{k-1}\right) \\ 
& =\beta_k \tilde{\theta}_k^{\top}\left(\hat{\theta}_k-\hat{\theta}_{k-1}\right)-\mu \beta_k \frac{\varphi_k^{\top} \tilde{\theta}_k  G_k^{\prime}\left(\varphi_k^{\top} \theta_k\right) G_k^{\prime}\left(\varphi_k^{\top} \tilde{\theta}_k\right)}{1+\left\|\varphi_k\right\|^2}\varphi_k^{\top}\left( \hat{\theta}_k-\hat{\theta}_{k-1}\right) \\
&-\mu \beta_k \frac{G_k^{\prime}\left(\varphi_k^{\top} \hat{\theta}_k\right)}{1+\left\|\varphi_k\right\|^2} \varphi_k^{\top}\left(\hat{\theta}_k-\hat{\theta}_{k-1}\right) w_{k+1}.
\end{aligned}
\end{equation}

Since both $\theta_{k}$ and $\hat{\theta}_{k}$ belong to $\mathcal{D}$, each component satisfies:
\begin{equation}
    \left|\theta_{k, i}\right| \leq L \quad \text { and } \quad\left|\hat{\theta}_{k, i}\right| \leq L.
\end{equation}
Then one can deduce that
\begin{equation} \label{bounded_theta2}
    \left\|\tilde{\theta}_k\right\| \leq 2 p^{1 / 2} L,  ~~\|\hat{\theta}_{k+1} - \hat{\theta}_{k}\|\leq 2 p^{1 / 2} L.
\end{equation}

Therefore, according to \eqref{sencondterm2}, \eqref{bounded_theta2} and $ 0<\mu\leq1$, we know that
\begin{equation}\label{crossbound}
    \begin{aligned}
        &\beta_k\left[\tilde{\theta}_k^{\top}\left(I-\frac{\mu G_k^{\prime}\left(\varphi_k^{\top} \hat{\theta}_k\right) G_k^{\prime}\left(\varphi_k^{\top} \bar{\theta}_k\right)}{1+\left\|\varphi_k\right\|^2}  \varphi_k \varphi_k^{\top}\right) -\frac{\mu G_k^{\prime}\left(\varphi_k^{\top} \hat{\theta}_k\right) \varphi_k^{\top}}{1+\left\|\varphi_k\right\|^2}w_{k+1}\right]\left(\hat{\theta}_k-\hat{\theta}_{k-1}\right) \\
        =&O(\beta_k + \mu\beta_k|w_{k+1}|)
    \end{aligned}
\end{equation}

With \eqref{normtheta}, \eqref{saturated important} and \eqref{crossbound}, one can deduce that  the \textbf{Lyapunov function} $V_{k+1}=\left\|\tilde{\theta}_{k+1}\right\|^2$ satisfies
\begin{equation}
    \begin{aligned}
 \left\|\tilde{\theta}_{k+1}\right\|^2=&\left\|\theta_{k+1}-\hat{\theta}_{k+1}\right\|^2 \\
= & \left\|\left(\theta_k-\hat{\theta}_{k+1}\right)+\left(\theta_{k+1}-\theta_k\right)\right\|^2 \\
= & \left\|\theta_k-\hat{\theta}_{k+1}\right\|^2+O\left(\xi_{k+1}\right)+\xi_{k+1}^2 \\
\leq & \left\|\tilde{\theta}_k\right\|^2-\mu\underline{g}^2_k  \alpha_k  -\frac{2 \mu \varphi_k^{\top} \tilde{\theta}_k G_k^{\prime}\left(\varphi_k^{\top} \hat{\theta}_k\right) m_k}{1+\left\|\varphi_k\right\|^2} w_{k+1}\\
&+O\left(\mu^2 w_{k+1}^2+\beta_k +\mu\beta_k|w_{k+1}|+\xi_{k+1}\right).
\end{aligned}
\end{equation}
The final inequality holds because $\xi$ is sufficiently small.

This completes the proof.
\end{proof}

Now we are ready to prove the main theorem.
\begin{proof}[Proof of Theorem \ref{main theorem}] It follows from Lemma \ref{lemma2} that

\begin{equation}\label{inequal}
\begin{aligned}
    &\sum_{k=0}^T \alpha_k + \sum_{k=0}^T \frac{2 \mu \varphi_k^{\top} \tilde{\theta}_kG_k^{\prime}\left(\varphi_k^{\top} \hat{\theta}_k\right) m_k}{(1+\left\|\varphi_k\right\|^2)h^2} w_{k+1}\\
    \leq& \frac{1}{\mu h^2}\left[\|\tilde{\theta}_0\|^2 - \|\tilde{\theta}_{k+1}\|^2\right]\\
&+ O\left(\sum_{k=0}^T \mu^2w_{k+1}^2 + \sum_{k=0}^T \beta_k + \sum_{k=0}^T \mu\beta_k |w_{k+1}|+ \sum_{k=0}^T \xi_{k+1}\right),
\end{aligned}
\end{equation}
where $h = \inf\limits_{ k \geq 0, ~|x| \leq C} G_k^{\prime}(x)$.

With Lemma \ref{martingconvergence} and noticing the boundedness of $h$, $\mu$, $m_k$, we have
\begin{equation}
     \sum_{k=0}^T \frac{2 \mu \varphi_k^{\top} \tilde{\theta}_kG_k^{\prime}\left(\varphi_k^{\top} \hat{\theta}_k\right) m_k}{(1+\left\|\varphi_k\right\|^2)h^2} w_{k+1}
     = o(\sum_{k=0}^{T}\alpha_k), \quad \text{a.s.}
\end{equation}

With Lemma \ref{martingconvergence} and noticing that $\mathbb{E}[w_{k+1}^4|\mathcal{F}_k] \leqslant \infty$, we have
\begin{equation}\label{thm1}
\begin{aligned}
    \sum_{k=0}^T w_{k+1}^2 =& \sum_{k=0}^T \left[w_{k+1}^2 - \mathbb{E}[w_{k+1}^2 \mid \mathcal{F}_{k}]\right] + \sum_{k=0}^T \mathbb{E}[w_{k+1}^2 \mid \mathcal{F}_{k}]\\
    &\leq \sum_{k=0}^T \mathbb{E}[w_{k+1}^2 \mid \mathcal{F}_{k}]+o(T), ~\text{a.s.}
\end{aligned}
\end{equation}
Where we use the fact that $\{ w_{k+1}^2 - \mathbb{E}[w_{k+1}^2 \mid \mathcal{F}_{k}]\} $ is a martingale difference sequence.

Since $\beta_k=\frac{1}{k^\delta}$ and $\delta \in(0,1)$, $\sum\limits_{k=1}^T \frac{1}{k^\delta} \leq 1+\int_1^T \frac{1}{x^\delta} d x=1+\frac{T^{1-\delta}-1}{1-\delta} \leq \frac{2}{1-\delta} T^{1-\delta}$, then we have
\begin{equation}\label{thm2}
    \sum\limits_{k=0}^T \beta_k = o(T).
\end{equation}

Moreover, according to lemma \ref{martingconvergence} and following the analysis of \eqref{thm1} and \eqref{thm2}, one can deduce that
\begin{equation} \label{thm3}
    \sum_{k=0}^T \beta_k |w_{k+1}| = o(T), ~\text{a.s.}
\end{equation}

Combing \eqref{thm1}, \eqref{thm2} and \eqref{thm3}, we have
\begin{equation}
    \sum_{k=0}^T \alpha_k \leqslant \frac{2}{\mu h^2 }\left[\|\tilde{\theta}_0\|^2 - 2\|\tilde{\theta}_{k+1}\|^2\right] + \mu^2\sum_{k=0}^T \mathbb{E}[w_{k+1}^2 \mid \mathcal{F}_{k}] + O(\sum_{k=0}^T \xi_{k+1}) + o(T), ~\text{a.s.}
\end{equation}

Combing  this, we have

\begin{equation}\label{desired inequality1}
   \begin{aligned}
& \lim _{T \rightarrow \infty} \sup \frac{1}{T} \sum_{k=0}^T (y_{k+1}-\hat{y}_{k+1})^2 \\
=& \lim _{T \rightarrow \infty} \sup \frac{1}{T} \sum_{k=0}^T (\psi_{k} + w_{k+1})^2 \\
\leq & \limsup _{T \rightarrow \infty} \frac{1}{T} \sum_{k=0}^T\left[\frac{1}{h^2}\alpha_k^2(1+\|\varphi_k\|^2) + 2  G_k^{\prime}\left(\varphi_k^{\top} \bar{\theta}_k \right)(\varphi_k^{\top}\tilde{\theta}_k)w_{k+1}+ w_{k+1}^2\right]\\
\leq&  \sigma_w^2+\limsup _{T \rightarrow \infty} \left\{ \frac{1}{Th^2} \sum_{k=0}^T \alpha_k\left(1+\left\|\varphi_k\right\|^2\right) \right\} \\
\leq& (1+\mu^2)\sigma_w^2 + O(\xi), ~ \text{a.s.}
\end{aligned} 
\end{equation}
where $G_k^{\prime}\left(\varphi_k^{\top} \bar{\theta}_k \right)$ defined in \eqref{normtheta} and $h$ is defined in \eqref{inequal}. The second equality holds due to the fact that $\sum\limits_{k=0}^{n}G_k^{\prime}\left(\varphi_k^{\top} \bar{\theta}_k \right)(\varphi_k^{\top}\tilde{\theta}_k)w_{k+1} = o(\sum\limits_{k=0}^{n}\alpha_k(1+\|\varphi_k\|^2))$ due to Lemma \ref{martingconvergence} and the boundedness of $G_k^{\prime}(\cdot)$ and $\|\varphi_k\|$. The last inequality holds due to the fact \eqref{bountheta} and the Lyapunov inequality.

From \eqref{desired inequality1} and the Cauchy–Schwarz inequality, it follows that
\begin{equation}
\begin{aligned}
    &\limsup _{T \rightarrow \infty} \frac{1}{T} \sum_{k=1}^T\frac{|y_{k+1}-\hat{y}_{k+1}|}{y_{k+1}} \\
    \leq& \limsup _{T \rightarrow \infty} \frac{1}{T} \left(\sum_{k=1}^T \frac{1}{y_k^2}\right)^{\frac{1}{2}}\left(\sum_{k=1}^T (y_{k+1}-\hat{y}_{k+1})^2\right)^{\frac{1}{2}} \\
    \leq & \frac{1}{c} \limsup _{T \rightarrow \infty} \left(\sum_{k=1}^T \frac{1}{y_k^2}\right)^{\frac{1}{2}}\left(\frac{1}{T}\sum_{k=1}^T (y_{k+1}-\hat{y}_{k+1})^2\right)^{\frac{1}{2}}\\
    =&O\left(\frac{\sigma_w}{c}\right)+O\left(\frac{\xi^{\frac{1}{2}}}{c}\right),~ \text{a.s.}
\end{aligned}
\end{equation}
where $c>0$ is the constant defined in Theorem \ref{main theorem}. This completes the proof.

\end{proof}

Intuitively, a RA value closer to 1 indicates higher precision in sentencing prediction. However, when sentence lengths are affected by noise, the expected absolute error $\mathbb{E}|y_k - \hat{y}_k|$ is bounded below by the noise variance, making it impossible for the Relative Accuracy to reach 1 as $n$ tends infinity. To illustrate this point, we introduce the following lemmas.

\begin{lemma} (See, e.g., \cite{guo2020tima}, Theorem 1.2.18.) \label{von neumann}
Let \(\{a_k, k \geq 0\}\) be a bounded non-negative sequence. A necessary and sufficient condition for the \(\lim\limits_{n \to \infty} \frac{1}{n} \sum\limits_{k=0}^{n-1} a_k = 0\) to hold is that there exists a set \(E \subset \mathbb{Z}_{+}\) with density zero (i.e., \(\lim\limits_{n \to \infty} \frac{1}{n} \sum\limits_{k=0}^{n-1} I(k \in E) = 0\)), such that \(\lim\limits_{\substack{n \to \infty \\ \ n \notin E}} a_n = 0\).
\end{lemma}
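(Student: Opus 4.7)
The plan is to reduce the statement to a Ces\`{a}ro-vanishing claim: the two $\limsup$'s agree if and only if
\[
\frac{1}{T}\sum_{k=1}^T \frac{|y_{k+1}-\hat{y}^*_{k+1}|-\sigma_k}{y_{k+1}}\to 0 \quad\text{a.s.}
\]
Set $\xi_{k+1}:=|y_{k+1}-\hat{y}^*_{k+1}|-\sigma_k$. Because $\sigma_k=\mathbb{E}_k[|y_{k+1}-\hat{y}^*_{k+1}|]$ and $\hat{y}^*_{k+1}=S_k(\varphi_k^\top\theta_k)$ is $\mathcal{F}_k$-measurable, $\{\xi_{k+1},\mathcal{F}_{k+1}\}$ is a martingale-difference sequence; Assumption~\ref{A2} forces $y_{k+1},\hat{y}^*_{k+1}\in[c,M]$, so $|\xi_{k+1}|\le 2(M-c)$ and all conditional moments are finite.

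The main obstacle is that $1/y_{k+1}$ is \emph{not} $\mathcal{F}_k$-measurable, so Lemma~\ref{martingconvergence} cannot be applied directly to $\xi_{k+1}/y_{k+1}$. I would extract an $\mathcal{F}_k$-measurable factor via
\[
\frac{1}{y_{k+1}}=\frac{1}{\hat{y}^*_{k+1}}+\frac{\hat{y}^*_{k+1}-y_{k+1}}{\hat{y}^*_{k+1}\,y_{k+1}},
\]
splitting the target sum into two pieces. For the first piece, $1/\hat{y}^*_{k+1}\in[1/M,1/c]$ is $\mathcal{F}_k$-measurable, so $\xi_{k+1}/\hat{y}^*_{k+1}$ is a bounded martingale-difference sequence, and Lemma~\ref{martingconvergence} with $\alpha=2$ yields $T^{-1}\sum_{k=1}^T \xi_{k+1}/\hat{y}^*_{k+1}=o(1)$ a.s. For the cross term $T^{-1}\sum_{k=1}^T \xi_{k+1}(\hat{y}^*_{k+1}-y_{k+1})/(\hat{y}^*_{k+1}\,y_{k+1})$, I would further subtract and add its $\mathcal{F}_k$-conditional expectation: the centered component is again a bounded martingale-difference Ces\`{a}ro mean controlled by Lemma~\ref{martingconvergence}, leaving the time average of the conditional covariances $(\hat{y}^*_{k+1})^{-1}\mathrm{Cov}_k\bigl(|y_{k+1}-\hat{y}^*_{k+1}|,\,(\hat{y}^*_{k+1}-y_{k+1})/y_{k+1}\bigr)$ as a residual.

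Controlling this residual is the hardest step, since $|y_{k+1}-\hat{y}^*_{k+1}|$ and $1/y_{k+1}$ are statistically coupled through the common noise $\varepsilon_{k+1}$. I would apply Cauchy--Schwarz together with the $1$-Lipschitz property $|S_k'|\le 1$ to bound this covariance by a constant multiple of $\mathrm{Var}_k(y_{k+1}-\hat{y}^*_{k+1})/c^2$, and then invoke Assumption~\ref{bouned noise} to relate the conditional variance to the noise dispersion on the interval where the saturation function acts linearly. Finally, Lemma~\ref{von neumann} supplies the density-zero reduction needed to pass from pointwise control of this conditional variance on the density-one subsequence of ``unsaturated'' indices to the desired Ces\`{a}ro convergence of the residual to $0$. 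Combining the three pieces establishes equality of the two $\limsup$'s, and the illustrative example in the appendix then shows how $\sigma_k$ may be computed explicitly from the known noise distribution, making the theoretical bound numerically accessible.
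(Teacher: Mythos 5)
Your proposal does not prove the stated lemma at all; it is a proof sketch for a different result in the paper, namely the characterization of the best achievable $RA^*(T)$ (Theorem~\ref{theorem new} via Lemma~\ref{Claim 2}). The statement you were asked to prove is Lemma~\ref{von neumann}, a purely deterministic fact about bounded non-negative real sequences: the Ces\`{a}ro mean $\frac{1}{n}\sum_{k=0}^{n-1}a_k$ tends to $0$ if and only if $a_n\to 0$ along the complement of a density-zero set $E\subset\mathbb{Z}_+$. Nothing in your argument touches this; worse, your sketch explicitly \emph{invokes} Lemma~\ref{von neumann} as an ingredient (``Lemma~\ref{von neumann} supplies the density-zero reduction\ldots''), so read as a proof of that lemma it would be circular. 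The paper itself does not prove the lemma either --- it cites it as a known result (the Koopman--von Neumann lemma) --- but a blind proof attempt should still supply the argument for the statement given, not for a neighboring theorem.

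For the record, the missing argument is short. For sufficiency: if $0\le a_k\le B$ and $a_n\to 0$ off a density-zero set $E$, then
\begin{equation*}
\frac{1}{n}\sum_{k=0}^{n-1}a_k \;\le\; \frac{B}{n}\sum_{k=0}^{n-1}I(k\in E)\;+\;\frac{1}{n}\sum_{\substack{k<n\\ k\notin E}}a_k \;\longrightarrow\; 0 .
\end{equation*}
For necessity: set $E_m=\{k:\,a_k>1/m\}$; since $\frac{1}{n}\sum_{k<n}a_k\ge \frac{1}{m}\cdot\frac{1}{n}\,\#\bigl(E_m\cap[0,n)\bigr)$, each $E_m$ has density zero. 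Choose $n_1<n_2<\cdots$ so that $\frac{1}{n}\,\#\bigl(E_{m+1}\cap[0,n)\bigr)<\frac{1}{m+1}$ for all $n\ge n_m$, and let $E=\bigcup_m\bigl(E_{m+1}\cap[n_m,n_{m+1})\bigr)$. A standard diagonal estimate shows $E$ has density zero, and for $n\in[n_m,n_{m+1})\setminus E$ one has $a_n\le 1/(m+1)$, so $a_n\to 0$ off $E$. You should either reproduce this argument or, as the paper does, cite it; what you wrote instead belongs to the proof of a different theorem.
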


We now provide a rigorous clarification that the RA cannot attain 1.
\begin{proposition}
\label{Claim 1}
Suppose Assumptions \ref{A1}, \ref{A2}, and \ref{bouned noise} hold. 
Then there exists a constant \( 0 \leq c_3 < 1 \) such that, for any predictor \( \hat{y}_k \in [L_k, U_k]\) , 
the following upper bound on the asymptotic relative error holds:
\begin{equation}
\limsup_{T \rightarrow \infty} \left(1 - \frac{1}{T} \sum_{k=1}^T \frac{\left|y_k - \hat{y}_k\right|}{y_k} \right) \leq c_3, ~ \text{a.s.}.
\end{equation}
\end{proposition}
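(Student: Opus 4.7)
The plan is to convert the desired upper bound on $\limsup_T\bigl(1 - \tfrac{1}{T}\sum_{k=1}^T |y_k-\hat{y}_k|/y_k\bigr)$ into a strictly positive asymptotic lower bound on the averaged relative error $\tfrac{1}{T}\sum_{k=1}^T |y_k-\hat{y}_k|/y_k$, and to derive that lower bound from a per-step, almost-sure lower bound on the conditional absolute prediction error. Once one establishes that, for some deterministic $\delta_0 > 0$ independent of $k$ and of the choice of $\mathcal{F}_{k}$-measurable predictor $\hat{y}_{k+1} \in [L_k, U_k]$,
\begin{equation*}
\mathbb{E}_{k}|y_{k+1} - \hat{y}_{k+1}| \;\geq\; \delta_0, \quad \text{a.s.,}
\end{equation*}
the martingale strong law (Lemma \ref{martingconvergence}) will close the argument.

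The main geometric estimate proceeds as follows. Fix a small $\eta > 0$ (for instance $\eta \leq (M-c)/8$). For any $\hat y \in [L_k,U_k]\subset [c,M]$, at least one of the two endpoint intervals $I_k^- = [L_k, L_k+\eta]$, $I_k^+ = [U_k - \eta, U_k]$ lies at Euclidean distance at least $(U_k - L_k)/4$ from $\hat y$; call it $I_k$. By Assumption \ref{A1} the quantity $|\varphi_k^\top \theta_k|$ is uniformly bounded by some deterministic constant $C_0$, so the event $\{y_{k+1} \in I_k\}$, including the saturated outcomes $L_k$ or $U_k$, contains an $\mathcal F_k$-measurable event of the form $\{\varepsilon_{k+1} \in J_k\}$, where $J_k$ is an interval of length at least $\eta$ contained in the fixed compact set $[-(C_0+M), C_0+M]$. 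Assumption \ref{bouned noise} with $h := C_0 + M$ then yields $\mathbb P_k(y_{k+1} \in I_k) \geq \epsilon_h\, \eta$, and hence
\begin{equation*}
\mathbb E_k|y_{k+1} - \hat{y}_{k+1}| \;\geq\; \tfrac{U_k - L_k}{4}\, \epsilon_h\, \eta \;\geq\; \delta_0,
\end{equation*}
uniformly in $k$ and in the choice of $\hat{y}_{k+1}$.

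Combining this with $y_{k+1} \leq M$ from Assumption \ref{A2} gives the key lower bound $\mathbb E_k[|y_{k+1} - \hat{y}_{k+1}|/y_{k+1}] \geq \delta_0/M$. The centered sequence $\zeta_{k+1} := |y_{k+1} - \hat{y}_{k+1}|/y_{k+1} - \mathbb E_k[|y_{k+1}-\hat{y}_{k+1}|/y_{k+1}]$ is a bounded $\mathcal F_k$-martingale difference (both $y_{k+1}$ and $\hat y_{k+1}$ lie in $[c, M]$), so applying Lemma \ref{martingconvergence} with $g_k \equiv 1$ yields $\tfrac{1}{T}\sum_{k=0}^{T-1} \zeta_{k+1} \to 0$ almost surely. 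Consequently,
\begin{equation*}
\liminf_{T\to\infty}\frac{1}{T}\sum_{k=1}^T \frac{|y_k-\hat{y}_k|}{y_k} \;\geq\; \frac{\delta_0}{M} \;>\; 0, \quad \text{a.s.,}
\end{equation*}
and the proposition follows with $c_3 := 1 - \delta_0/M \in [0,1)$.

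The principal obstacle is the uniform-in-$k$ construction of the noise sub-event $\{\varepsilon_{k+1} \in J_k\}$: the delicate case is when $\hat{y}_{k+1}$ lies near an endpoint of $[L_k, U_k]$, so that the far interval $I_k$ sits at the opposite endpoint and one must ensure that the noise can still drive $\varphi_k^\top\theta_k + \varepsilon_{k+1}$ either into $I_k$ or past the corresponding saturation threshold with probability bounded below. The uniform bound on $\varphi_k^\top\theta_k$ together with the density lower bound on a sufficiently large compact set handles this cleanly. As an alternative finishing step, Lemma \ref{von neumann} can be invoked: if the Cesàro average of $|y_k-\hat{y}_k|/y_k$ tended to $1$, the lemma would force the complementary quantity to vanish along a density-one subsequence, contradicting the per-step lower bound derived above.
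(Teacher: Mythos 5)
Your proof is correct in substance but follows a genuinely different route from the paper. The paper argues by contradiction: it assumes the averaged relative error tends to zero, invokes Lemma \ref{von neumann} to extract convergence $|y_k-\hat y_k|\to 0$ off a density-zero set, passes to conditional expectations, and contradicts a claimed uniform lower bound $\mathbb{E}_{k-1}|y_k-\hat y_k|\ge c_4>0$ obtained from the explicit saturated-expectation formula. You instead prove that uniform per-step conditional lower bound directly by a geometric argument (one of the endpoint intervals of $[L_k,U_k]$ is far from any $\hat y_{k+1}$, and Assumption \ref{bouned noise} plus the uniform bound on $\varphi_k^{\top}\theta_k$ gives that interval conditional probability at least $\epsilon_h\eta$), and then close with the martingale SLLN (Lemma \ref{martingconvergence}) applied to the bounded centered sequence, yielding $\liminf_T \frac{1}{T}\sum_k |y_k-\hat y_k|/y_k \ge \delta_0/M$ a.s. This buys you something the paper's write-up does not literally deliver: an explicit uniform constant $c_3=1-\delta_0/M$ valid for every admissible ($\mathcal F_k$-measurable) predictor, rather than merely ruling out convergence of RA to $1$, and it avoids the delicate step of pushing an almost-sure limit along a random density-one subsequence through conditional expectations. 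Two small caveats, shared with the paper rather than specific to you: the uniform positivity of $\delta_0$ (like the paper's $c_4$) tacitly requires $\inf_k(U_k-L_k)>0$, which Assumption \ref{A2} does not state but which holds in the intended application; accordingly your fixed choice $\eta\le (M-c)/8$ should be replaced by $\eta$ proportional to $\inf_k(U_k-L_k)$ (e.g.\ $\eta=\inf_k(U_k-L_k)/4$) so that the claimed distance $(U_k-L_k)/4$ to the far endpoint interval actually holds, and the phrase ``$\mathcal F_k$-measurable event $\{\varepsilon_{k+1}\in J_k\}$'' should read that the interval $J_k$ is $\mathcal F_k$-measurable, which is what the conditional density bound needs.
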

\begin{proof}[Proof of Proposition \ref{Claim 1}]
    We prove this proposition by contradiction. Suppose that there exists a sequence $\{\hat{y}_{k_{i}}\}$ where $\{k_{i}\}$ is a subsequence of $\{k\}$ such that $ \lim\limits_{T \rightarrow \infty} 1 - \frac{1}{T} \sum_{i=1}^T \frac{\left|y_{k_i} - \hat{y}_{k_i}\right|}{y_{k_i}} = 1$. Without loss of generality, we assume that the sequence  $\{k_{i}\}$ coincides with  $\{k\}$ for convenience of analysis.
    This implies that
    \begin{equation}
        \lim_{T \rightarrow \infty} \frac{1}{T} \sum_{k=1}^T \frac{\left|y_k - \hat{y}_k\right|}{y_k} = 0.
    \end{equation}
According to Lemma \ref{von neumann}, one can know that there exists a zero-density set $E\in \mathbb{Z}+$ such that
\begin{equation}
    \lim\limits_{\substack{k \to \infty \\ \ k \notin E}} \frac{\left|y_k - \hat{y}_k\right|}{y_k} = 0,
\end{equation}
since $y_k > c$, $\forall k \geq 1$, we have
\begin{equation}\label{tends0}
    \lim\limits_{\substack{k \to \infty \\ \ k \notin E}} \left|y_k - \hat{y}_k\right| = 0.
\end{equation}
Taking conditional expectations on both sides, we obtain
\begin{equation}
    \begin{aligned}
         \lim\limits_{\substack{k \to \infty \\ \ k \notin E}} \mathbb{E}_{k-1}\left|y_k - \hat{y}_k\right| = 0.
    \end{aligned}
\end{equation}
However, under Assumptions \ref{A1}, \ref{A2}, and \ref{bouned noise}, the conditional expectation is given by
\begin{equation}
    \begin{aligned}
        &\mathbb{E}_{k-1}\left[\left|y_k - \hat{y}_k\right|\right]\\
        =& (U_{k-1} - \hat{y}_k) \left[1 - F_k\left(U_{k-1}  - \varphi_{k-1} ^{\top}\theta_{k-1} \right)\right] + (\hat{y}_k - L_{k-1} )F_k\left(L_{k-1}  -  \varphi_{k-1} ^{\top}\theta_{k-1} \right) \\
        &+ \int_{L_{k-1} -   \varphi_{k-1} ^{\top}\theta_{k-1} }^{U_{k-1}  -   \varphi_{k-1} ^{\top}\theta_{k-1} } |S_k(  \varphi_{k-1} ^{\top}\theta_{k-1} +x) -\hat{y}_k|f_{k}(x) dx \\
        \geq& c_4 >0,
    \end{aligned}
\end{equation}
where $F_k(\cdot)$ denotes the conditional distribution function of $\varepsilon_k$ given $\mathcal{F}_{k-1}$, $c_4$ is a positive constant. And we also use the fact that $\varphi_k^{\top}\theta_k$, $L_k$ and $U_k$ are bounded uniformly. This leads to a contradiction with \eqref{tends0}, thereby completing the proof.
\end{proof}

To prove Theorem \ref{theorem new}, we need the following definition and lemma.
\begin{definition} (\cite{zheng2024l1})\label{mediand}
Consider a non-decreasing sequence of $\sigma$-algebras $\{\mathcal{F}_k\}_{k \geq 0}$, and let $\{x_k\}_{k \geq 0}$ be an $\mathcal{F}_k$-adapted stochastic process. The conditional distribution of the random variable $x_{k+1}$ given $\mathcal{F}_k$ is denoted by $\mathbb{P}_{k+1}(\cdot)$. The corresponding conditional median, $m_k$, which is $\mathcal{F}_k$-measurable, is defined such that $\mathbb{P}_{k+1}(x_{k+1} \leq m_k) \geq 1/2$ and $\mathbb{P}_{k+1}(x_{k+1} \geq m_k) \geq 1/2$.
\end{definition}

We now present a formal theorem that characterizes the upper bound of the RA metric achievable under optimal prediction, i.e., when the true model parameters are known.
\begin{lemma}\label{Claim 2}
    Under the conditions of Theorem~\ref{main theorem} and Assumption~\ref{bouned noise},the optimal adaptive predictor that minimizes $\mathbb{E}_k[|y_{k+1}-\hat{y}_{k+1}|]$ is given by   $\hat{y}_{k+1}^{*} = S_k(\varphi_{k}^{\top}\theta_{k})$. Moreover, 
    \begin{equation}
        \liminf_{T\rightarrow \infty} \frac{1}{T}\sum_{k=1}^{T}|y_{k+1} - \hat{y}_{k+1}^{*}|  = \liminf_{T\rightarrow \infty}\frac{1}{T}\sum_{k=1}^{T}\sigma_{sk}, ~\text{a.s.}
    \end{equation}
    where $\sigma_{k} = \mathbb{E}_{k}\left[|S_k(\varphi_{k}^{\top}\theta_{k} + \varepsilon_{k+1})-S_k(\varphi_{k}^{\top}\theta_{k} )|\right]$, and $\forall \epsilon >0$.
\end{lemma}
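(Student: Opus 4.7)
The plan is to decompose the claim into two assertions: (i) the $L_1$-optimality of $\hat{y}_{k+1}^* = S_k(\varphi_k^\top \theta_k)$ as the minimizer of the conditional risk $\mathbb{E}_k[|y_{k+1} - \hat{y}_{k+1}|]$ among $\mathcal{F}_k$-measurable predictors; and (ii) the almost sure identification of its Cesàro-averaged absolute error with $\frac{1}{T}\sum_{k=1}^T \sigma_k$. Part (i) is a classical conditional median characterization, and part (ii) will follow from a martingale strong law of large numbers obtained from Lemma \ref{martingconvergence}.

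For (i), I would invoke the standard fact that the minimizer of $c \mapsto \mathbb{E}[|Z - c|]$ over real $c$ is a median of $Z$; conditioning on $\mathcal{F}_k$ then identifies the minimizer of $\mathbb{E}_k[|y_{k+1} - \hat{y}_{k+1}|]$ with a conditional median $m_k$ of $y_{k+1}$ in the sense of Definition \ref{mediand}. To identify $m_k$ with $S_k(\varphi_k^\top \theta_k)$, I would use that both $S_k$ and $\varphi_k^\top \theta_k$ are $\mathcal{F}_k$-measurable and that $S_k$ is monotone nondecreasing. Under Assumption \ref{bouned noise} (which encodes the zero-median / symmetric-density structure of $\varepsilon_{k+1}$ via the positive lower bound of $f_{k+1}$ on a neighborhood of $0$), one has $\mathbb{P}_k(\varepsilon_{k+1} \leq 0) \geq 1/2$ and $\mathbb{P}_k(\varepsilon_{k+1} \geq 0) \geq 1/2$; monotonicity of $S_k$ then transports these inequalities to the corresponding ones for $y_{k+1} = S_k(\varphi_k^\top \theta_k + \varepsilon_{k+1})$ with threshold $S_k(\varphi_k^\top \theta_k)$, so the latter qualifies as a conditional median of $y_{k+1}$, establishing $L_1$-optimality.

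For (ii), set $d_{k+1} := |y_{k+1} - \hat{y}_{k+1}^*| - \sigma_k$. Substituting $y_{k+1} = S_k(\varphi_k^\top \theta_k + \varepsilon_{k+1})$ from \eqref{saturation time varying} and invoking the definition of $\sigma_k$, we have $\mathbb{E}_k[d_{k+1}] = 0$, so $\{d_{k+1}, \mathcal{F}_{k+1}\}$ is a martingale difference sequence; Assumption \ref{A2} further implies $|d_{k+1}| \leq 2(M-c)$, hence $\sup_k \mathbb{E}_k[d_{k+1}^2] < \infty$ almost surely. Applying Lemma \ref{martingconvergence} with $\alpha = 2$ and $g_i \equiv 1$ yields $\sum_{k=0}^{T-1} d_{k+1} = o(T)$ almost surely, so
\[
\frac{1}{T}\sum_{k=1}^T |y_{k+1} - \hat{y}_{k+1}^*| \; - \; \frac{1}{T}\sum_{k=1}^T \sigma_k \;\longrightarrow\; 0 \quad \text{a.s.},
\]
and taking $\liminf$ on both sides delivers the claimed identity.

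The principal obstacle is the median-transport step in (i): the saturation $S_k$ is constant outside $[L_k, U_k]$, so one must verify that the identification $m_k = S_k(\varphi_k^\top \theta_k)$ remains consistent with Definition \ref{mediand} even when $\varphi_k^\top \theta_k$ lies near (or beyond) a saturation threshold and the conditional distribution of $y_{k+1}$ carries a point mass at $L_k$ or $U_k$. Here Assumption \ref{bouned noise} is decisive: the strict positivity of $f_{k+1}$ on a neighborhood of $0$ rules out pathological mass concentration at the reference point, so the two median inequalities can be checked directly from the pushforward of the noise law through $S_k(\varphi_k^\top \theta_k + \cdot)$. The remaining pieces are routine.
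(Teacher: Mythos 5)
Your second half is exactly the paper's argument: write $|y_{k+1}-\hat{y}_{k+1}^*|-\sigma_k$ as a martingale difference, note it is uniformly bounded by Assumption \ref{A2}, and apply Lemma \ref{martingconvergence} with $g_i\equiv 1$ to get $o(T)$; no issue there. The gap is in your part (i). The paper does not actually prove the $L_1$-optimality of $S_k(\varphi_k^{\top}\theta_k)$; it imports it wholesale from Lemma 1 of \cite{zheng2024l1}, which is precisely the conditional-median characterization you sketch (cf.\ Definition \ref{mediand}). Your median route is therefore in the right spirit, but the step where you claim that Assumption \ref{bouned noise} ``encodes the zero-median / symmetric-density structure'' of $\varepsilon_{k+1}$, and hence gives $\mathbb{P}_k(\varepsilon_{k+1}\leq 0)\geq 1/2$ and $\mathbb{P}_k(\varepsilon_{k+1}\geq 0)\geq 1/2$, is simply not true. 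Assumption \ref{bouned noise} only asserts $\inf_{|x|\leq h} f_{k+1}(x)\geq \epsilon_h$ on compact sets; it constrains neither the symmetry nor the location of the conditional median. For instance, a conditional law $N(1,1)$ satisfies it, and then (with the saturation inactive) the conditional median of $y_{k+1}=S_k(\varphi_k^{\top}\theta_k+\varepsilon_{k+1})$ is $S_k(\varphi_k^{\top}\theta_k+1)$, not $S_k(\varphi_k^{\top}\theta_k)$, so your candidate would not minimize $\mathbb{E}_k[|y_{k+1}-\hat{y}_{k+1}|]$.

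So the missing ingredient is an explicit hypothesis that the conditional median of $\varepsilon_{k+1}$ given $\mathcal{F}_k$ is zero (or a symmetry condition implying it) — this is what the cited lemma effectively supplies and what your proof silently manufactures out of Assumption \ref{bouned noise}. With that hypothesis added, the rest of your part (i) is sound: the pointwise median characterization of $L_1$-minimizers, conditioning on $\mathcal{F}_k$, and the monotone transport of the two median inequalities through the nondecreasing, $\mathcal{F}_k$-measurable map $S_k(\varphi_k^{\top}\theta_k+\cdot)$ all go through, including at the saturation atoms $L_k,U_k$ (monotonicity alone gives $\{\varepsilon_{k+1}\leq 0\}\subseteq\{y_{k+1}\leq S_k(\varphi_k^{\top}\theta_k)\}$ and similarly for the other side, so the density lower bound is not even needed for that step). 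As written, however, the justification of the first assertion of the lemma does not follow from the assumptions you invoke.
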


\begin{proof}[Proof of Lemma \ref{Claim 2}]
    According to Lemma 1 in \cite{zheng2024l1}, we can deduce that $\hat{y}_{k+1}^{*} = S_k(\varphi_{k}^{\top}\theta_{k})$. Moreover,
    \begin{equation}
        \begin{aligned}
             &\frac{1}{T}\sum_{k=1}^{T}|y_{k+1} - \hat{y}_{k+1}^{*}|\\
             =& \frac{1}{T}\sum_{k=1}^{T} \left[|y_{k+1} - \hat{y}_{k+1}^{*}| - \sigma_{k}\right] + \frac{1}{T}\sum_{k=1}^{T}\sigma_{k}.
        \end{aligned}
    \end{equation}
According to Lemma \ref{martingconvergence}, we know that $ \frac{1}{T}\sum_{k=1}^{T} \left[|y_{k+1} - \hat{y}_{k+1}^{*}| - \sigma_{k}\right] = o(1)$ since $\{ |y_{k+1} - \hat{y}_{k+1}^{*}| - \sigma_{k}\}$ is a martingale difference sequence and the sequence $\{y_k\}$ is bounded. This completes the proof.
\end{proof}

\begin{proof}[Proof of Theorem \ref{theorem new}]
   Based on Lemma \ref{Claim 2}, a best upper bound for $RA(T)$ can be derived as follows:
\begin{equation}
    \limsup_{T \rightarrow \infty} ~~1 - \frac{1}{T} \sum_{k=1}^T \frac{\left|y_k - \hat{y}_k\right|}{y_k} = \limsup_{T \rightarrow \infty} ~~1 - \frac{1}{T}  \sum_{k=1}^T \frac{\sigma_{k}}{y_k} , ~\text{a.s.}
\end{equation}
where the denominator term $y_{k+1}$ in $RA(T)$ is regarded as a known and fixed positive weighting coefficient.
\end{proof}

Furthermore, we provide an example to illustrate how $\sigma_{sk}$ can be calculated if the conditional distribution of the noise is known. 

\textbf{Example}: Consider that the noise sequence $\left\{\varepsilon_k\right\}$ is independent and normally distributed with $\varepsilon_k \sim N\left(0, \sigma^2\right)$. Let the conditional probability distribution function and the conditional probability density function of $\varepsilon_k$ be $F(\cdot)$ and $f(\cdot)$, respectively. Then the function $\sigma_{s(k+1)}(\cdot)$ can be calculated as follows:

\begin{equation}
\begin{aligned}
\sigma_{s(k+1)}(x)= & (U_k - S_k(\varphi_k^{\top}\theta_k))[1-F(U_k - \varphi_k^{\top}\theta_k)]+ (S_k(\varphi_k^{\top}\theta_k)-L_k)F(L_k - \varphi_k^{\top}\theta_k)\\
& +\int_{L_k - \varphi_k^{\top}\theta_k}^{U_k - \varphi_k^{\top}\theta_k} |S_k(\varphi_k^{\top}\theta_k- \varphi_k^{\top}\theta_k-x|f(x) dx\\
=& (U_k - S_k(\varphi_k^{\top}\theta_k))[1-F(U_k - \varphi_k^{\top}\theta_k)]+ (S_k(\varphi_k^{\top}\theta_k)-L_k)F(L_k - \varphi_k^{\top}\theta_k)\\
& +[S_k(\varphi_k^{\top}\theta_k)-\varphi_k^{\top}\theta_k] [2F(S_k(\varphi_k^{\top}\theta_k)-\varphi_k^{\top}\theta_k))-F(U_k - \varphi_k^{\top}\theta_k)-)-F(L_k - \varphi_k^{\top}\theta_k)]\\
& + \sigma^2[2f(S_k(\varphi_k^{\top}\theta_k)-\varphi_k^{\top}\theta_k))-f(U_k - \varphi_k^{\top}\theta_k)-)-f(L_k - \varphi_k^{\top}\theta_k)].
\end{aligned}
\end{equation}

\section{Dataset Construction} \label{dataset}

\subsection{Motivation}
Most existing datasets have insufficient cases for specific offenses, limiting their ability to support sentencing prediction for individual crimes. Datasets such as CAIL \cite{xiao2018cail2018} and LECC \cite{xueleec} offer valuable repositories of raw judicial documents covering various criminal charges. However, they exhibit two significant shortcomings:
(1) Limited Case Coverage per Crime: The number of cases available for specific offenses is often insufficient, constraining models' ability to develop robust generalization for single-crime sentencing prediction \cite{su2025judge}.
(2) Lack of Fine-Grained Sentencing Features: Most datasets fail to include critical legal factors that heavily influence sentencing outcomes, rendering them incompatible with the mechanism sentencing models discussed earlier \cite{liebman2020mass,kuccuk2025computational}. For example, the Chinese legal event detection dataset by Yao et al. \cite{yao2022leven} excludes key sentencing determinants such as voluntary surrender, active compensation, and victim forgiveness, all of which significantly affect sentence severity.
These deficiencies highlight the need for specialized datasets tailored to single-crime sentencing prediction. Such datasets should incorporate comprehensive, legally relevant features to improve the accuracy and reliability of sentencing outcome predictions.

\subsection{Judgment Document Analysis}

The judgment document is structured into four key sections, as shown on the left side of Figure \ref{fig:-labertfeel11}: fact description, court's viewpoint, relevant laws and judgment. Together, these parts provide a clear, reasoned explanation of the court's decision, linking the facts of the case to the applicable laws and ensuring transparency in the judicial process.

The right side of Figure \ref{fig:-labertfeel11} illustrates the process of determining the features required for sentencing based on the judgment document, which naturally form the basis for the prediction of the sentence length. First, we extract the victim's injury severity from the fact description section, such as "second-degree minor injury" in this case. Second, case-specific features are extracted from the fact description, court's viewpoint, and relevant laws. It is important to note that since this paper focuses solely on sentencing type classification and length prediction, and does not consider legal article recommendation task, we use the relevant laws to aid feature extraction. For instance, in this case, the relevant legal article is Article 67, Paragraph 1 of the Criminal Law of the People's Republic of China, which stipulates the factor of the defendant's voluntary surrender. Based on this article, we infer the presence of this mitigating factor in the case. Third, the judgment section is used to extract the sentencing outcome, which serves as the label for sentence length prediction tasks.

While existing datasets \cite{ma2021lecard,huang2024cmdl,xian2024dlee,yao2023unsupervised}provide valuable information by including factual descriptions, relevant legal provisions, and judgment outcomes, the effect of the admissibility of evidence and facts in actual court proceedings should not be neglected \cite{cui2023survey}, and the judgment documents will reflect these information.
Therefore, our dataset integrates both the court's evaluative perspectives and the actual legitimate intent, addressing the issues where the rules of proof are insufficient and evidence remains ambiguous, which are common in most existing datasets.

\subsection{Feature Extraction}

The extracted features are meticulously designed to align with established judicial sentencing logic and exhibit a high degree of congruence with judicial professional standards. This principled feature engineering process ensures that the model's inputs are not merely statistical correlations but are rooted in substantive legal considerations. Specifically, these features are systematically derived from two primary sources: (a) codified Chinese legal statutes, ensuring direct relevance to the legal framework governing sentencing, and (b) empirically validated sentencing considerations prevalent in judicial practice, capturing nuances and factors that routinely influence sentencing decisions beyond the strict letter of the law. This dual-source approach aims to create a feature set that is both legally sound and practically meaningful for the task of judicial sentencing prediction.

Current methods typically employ word embedding techniques to extract key features. However, this approach can lead to incomplete keyword extraction and inaccurate feature identification, especially when the descriptions in judgment documents are ambiguous or imprecise. For instance, a judgment might state: Regarding the defendant's actions mentioned by the defender as self-defense, this court does not accept them. In such cases, the model may correctly identify the phrase "self-defense" but fail to capture the subsequent negation, resulting in erroneous feature extraction. 

To address these challenges, we propose a hybrid method combining regular expression (Regex) matching and large language models (LLM) to extract sentencing features from judicial documents. Traditional regex methods excel at efficiently handling structured or semi-structured data. Since judgment documents typically follow a standardized format, regex matching enhances efficiency when extracting fixed-form features, such as case numbers, dates, and predefined legal clauses. However, regex methods encounter limitations when addressing more complex tasks, such as identifying the severity of harm in legal cases. Conversely, LLMs offer strong semantic understanding and adaptability to various legal texts, enabling the extraction of context-dependent features. Nevertheless, they may produce errors when extracting features that are difficult to describe linguistically, and their high computational costs along with slower processing speeds pose significant challenges. Therefore, by integrating regex and LLMs, our hybrid approach leverages the strengths of both methods while mitigating their respective  drawbacks, resulting in more accurate and efficient feature extraction from large-scale judicial documents. Regarding feature quality assessment, this dataset was constructed through multiple rounds of cross-validation by legal professionals: initial extraction was performed using optimized regular expressions and LLM prompts, followed by manual verification by expert teams. The latest sampling test shows the current feature accuracy exceeds 98\%. We will continue to refine the feature extraction methodology with the accuracy rate expected to further improvement.

\subsection{Feature Analysis}

\begin{figure}[H]
    \centering
    \includegraphics[width=0.8\textwidth]{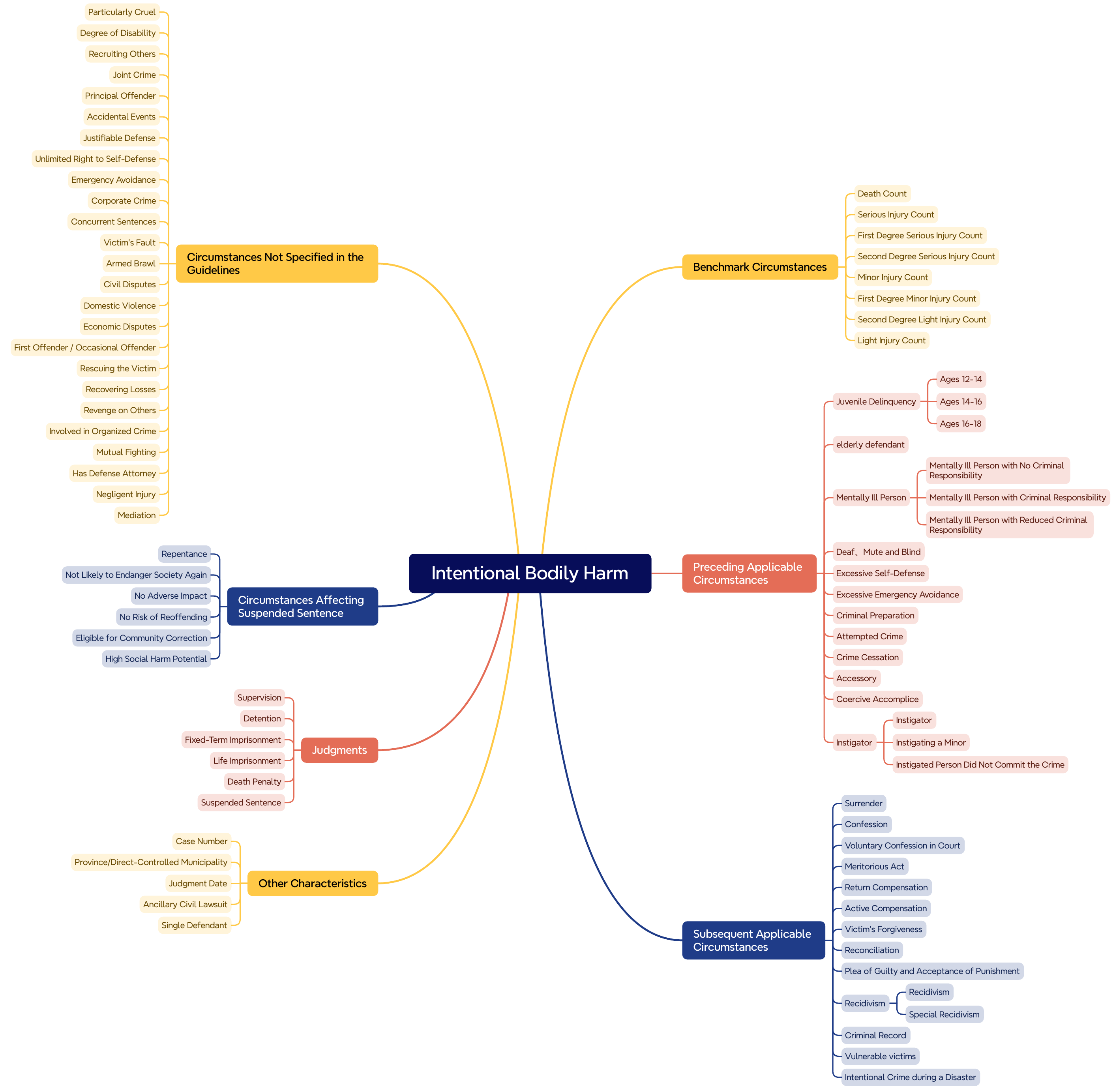}  
    \caption{Sentencing Factors for Intentional Bodily Harm Cases}
    \label{fig:eps_example233}
\end{figure}
As shown in Figure \ref{fig:eps_example233}, we present 82 distinct features relevant to sentencing in intentional bodily harm cases, organized into seven major categories based on the Criminal Law of the People's Republic of China and Sentencing Guidelines for Common Crimes.

These categories encompass a broad spectrum of considerations crucial for assessing the severity of an intentional bodily harm case, spanning not only injury-specific indicators such as the extent of physical harm but also judicial factors such as aggravating or mitigating circumstances. By systematically accounting for these diverse situational attributes, the framework provides a comprehensive basis for evaluating each case, thereby enhancing the clarity and consistency of legal decision-making.

To gain deeper insight into the relationships between sentencing factors, we generated a feature heatmap, as illustrated in Figure \ref{fig:eps_example132123}. The dark blue points on the heatmap clearly indicate a strong inverse correlation between "voluntary surrender" and "confession". Additionally, a similar negative association is observed between "plea of guilt and acceptance of punishment" and "voluntary plea of guilt in court".

\begin{figure}[h]
    \centering
    \includegraphics[width=1\textwidth]{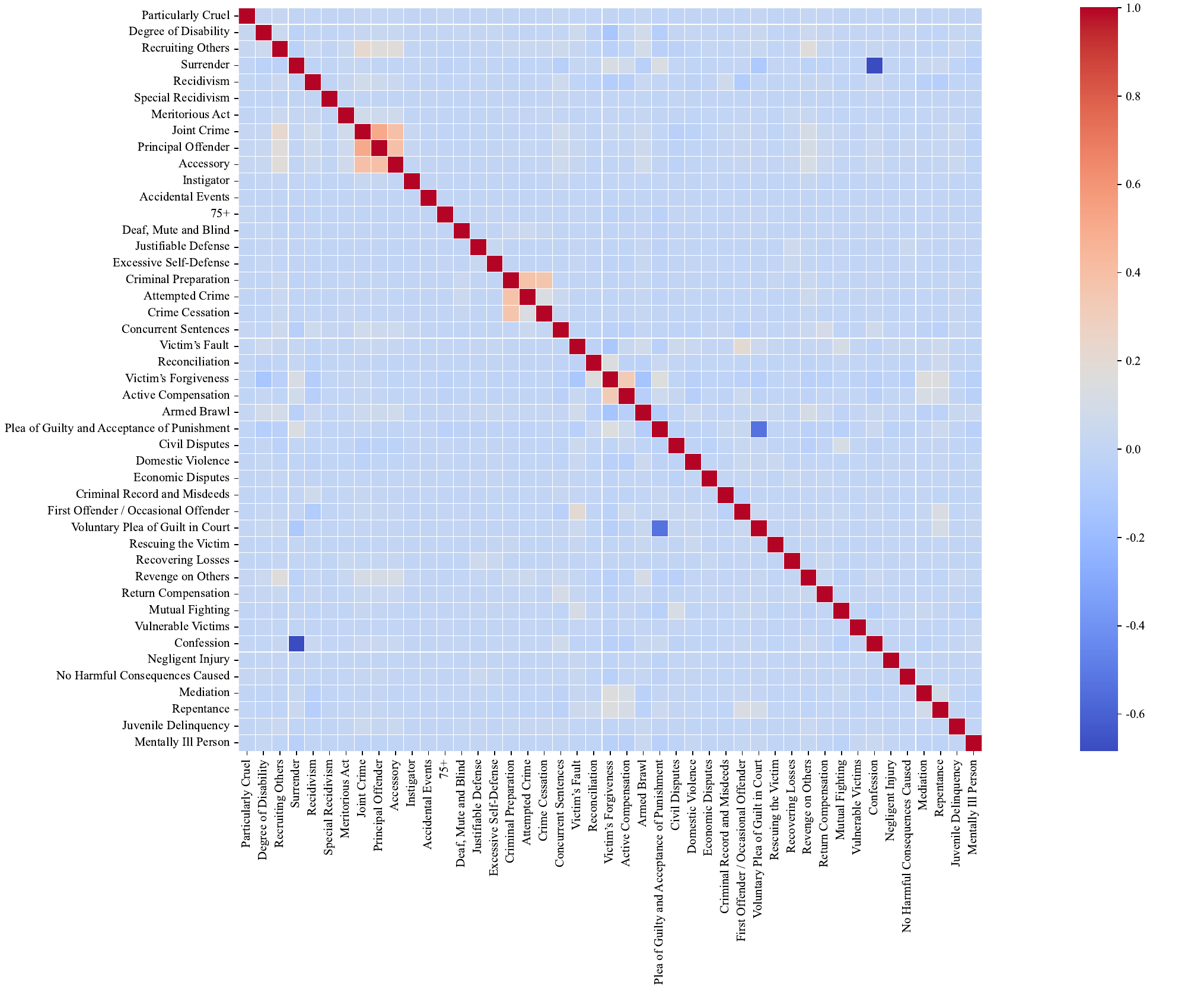}  
    \caption{Feature Correlation Heatmap}
    \label{fig:eps_example132123}
\end{figure}

To uncover the judicial interpretation behind this phenomenon, we examined the definitions of these four features, as outlined in Table \ref{feature-table12112}. Through consultations with legal practitioners, we learned that the features ``voluntary surrender" and ``confession", as well as "plea of guilt and acceptance of punishment" and ``voluntary plea of guilt in court", should not be redundantly evaluated. Specifically, when "voluntary surrender" is present, the feature ``voluntary  plea of guilt in court" should be set to 0. Similarly, ``plea of guilt and acceptance of punishment" does not overlap with ``voluntary plea of guilt in court". Only when ``voluntary surrender", ``confession", and ``plea of guilt and acceptance of punishment" are all set to 0 can "voluntary plea of guilt in court" be assigned a value of 1.

\newpage
\begin{table}[h]
\caption{The Judicial Interpretations for the Four Commonly Confused Features}
\label{feature-table12112}
\vskip 0.15in
\begin{center}
\begin{small}
\scalebox{0.94}{\begin{tabular}{lp{8cm}}
\toprule
Feature Name & Feature Definition \\
\midrule
Voluntary Surrender & The act of voluntarily turning oneself in after committing a crime and truthfully confessing one's actions. \\
Confession & The act of a suspect truthfully admitting to their crime even though they do not meet the criteria for voluntary surrender. \\
Voluntary Plea of Guilt in Court & The defendant voluntarily admits to the crime they are accused of during the court trial process. \\
Plea of Guilt and Acceptance of Punishment & The defendant truthfully confesses to their crime, raises no objection to the facts of the crime as charged, and agrees to the prosecutor's sentencing recommendation and signs the corresponding agreement. \\
\bottomrule
\end{tabular}}
\end{small}
\end{center}
\vskip -0.1in
\end{table}

Building on these principles, we generated new features by combining the four factors in various ways to better align our model with the actual sentencing process. The specific combinations are presented in Table \ref{feature-combination-table}.

\begin{table}[h]
\caption{Combination of Four Features.}
\label{feature-combination-table}
\vskip 0.15in
\begin{center}
\begin{tabular}{cccc}
\toprule
\scriptsize Voluntary Surrender & \scriptsize Confession & \scriptsize Plea of Guilt and Acceptance of Punishment & \scriptsize Voluntary Plea of Guilty in Court\\
\midrule
1 & 0 & 0 & 0 \\
1 & 0 & 1 & 0 \\
0 & 1 & 0 & 0 \\
0 & 1 & 1 & 0 \\
0 & 0 & 1 & 0 \\
0 & 0 & 0 & 1 \\
\bottomrule
\end{tabular}
\end{center}
\vskip -0.1in
\end{table}

\clearpage

\section{Randomized Search Space and Stability Assessment } \label{hyperanbhbu}

\begin{table*}[htbp]
\centering
\caption{Model configurations and hyperparameter settings}
\label{tab:model_configs_new11111}

\newcommand{\modelgroup}[1]{%
  \parbox{0.9\textwidth}{%
    \centering
    #1
  }%
}

\modelgroup{
\begin{tabular}{p{0.15\textwidth} p{0.45\textwidth} p{0.25\textwidth}}
\toprule
\textbf{Model} & \textbf{Tuned Hyperparameters} & \textbf{Default Parameters} \\
\midrule
Linear Regression & None & fit\_intercept: True \\
Random Forest &
\begin{tabular}[t]{@{}l@{}}
n\_estimators: 50--300 (int) \\
max\_depth: 3--15 (int) \\
min\_split: 2--11 (int) \\
min\_leaf: 1--6 (int) \\
bootstrap: [True, False]
\end{tabular} &
\begin{tabular}[t]{@{}l@{}}
n\_jobs: -1 \\
random\_state: 42
\end{tabular} \\
\bottomrule
\end{tabular}
}

\vspace{1em} 

\modelgroup{
\begin{tabular}{p{0.15\textwidth} p{0.45\textwidth} p{0.25\textwidth}}
\toprule
\textbf{Model} & \textbf{Tuned Hyperparameters} & \textbf{Default Parameters} \\
\midrule
MLP &
\begin{tabular}[t]{@{}l@{}}
hidden\_sizes: \{(64,), (128,), (64,64), (128,64)\} \\
activation: \{relu, tanh\} \\
alpha: $10^{-4}$ to $10^{-1}$ (log scale) \\
lr\_init: $10^{-3}$ to $10^{-1}$ (log scale)
\end{tabular} &
\begin{tabular}[t]{@{}l@{}}
early\_stopping: False \\
max\_iter: 500 \\
random\_state: 42
\end{tabular} \\
LightGBM &
\begin{tabular}[t]{@{}l@{}}
num\_leaves: 20--200 (int) \\
max\_depth: 3--12 (int) \\
learning\_rate: 0.01--0.3 \\
n\_estimators: 50--300 (int) \\
subsample: 0.6--1.0 \\
colsample: 0.6--1.0
\end{tabular} &
\begin{tabular}[t]{@{}l@{}}
objective: regression \\
random\_state: 42
\end{tabular} \\
\bottomrule
\end{tabular}
}

\vspace{1em} 

\modelgroup{
\begin{tabular}{p{0.15\textwidth} p{0.45\textwidth} p{0.25\textwidth}}
\toprule
\textbf{Model} & \textbf{Tuned Hyperparameters} & \textbf{Default Parameters} \\
\midrule
XGBoost &
\begin{tabular}[t]{@{}l@{}}
n\_estimators: 50--300 (int) \\
max\_depth: 3--12 (int) \\
learning\_rate: 0.01--0.3 \\
subsample: 0.6--1.0 \\
colsample: 0.6--1.0 \\
gamma: 0--0.5
\end{tabular} &
\begin{tabular}[t]{@{}l@{}}
tree\_method: hist \\
objective: reg:squarederror \\
eval\_metric: mae \\
random\_state: 42
\end{tabular} \\
CatBoost &
\begin{tabular}[t]{@{}l@{}}
iterations: 50--300 (int) \\
depth: 3--10 (int) \\
learning\_rate: 0.01--0.3 \\
l2\_reg: 1--5 \\
border\_count: 32--255 (int)
\end{tabular} &
\begin{tabular}[t]{@{}l@{}}
verbose: 0 \\
loss\_function: RMSE \\
random\_state: 42
\end{tabular} \\
\bottomrule
\end{tabular}
}

\footnotesize
Notation:  
(int): Discrete integer values in the given range.  
(log scale): Sampled logarithmically from the interval.  
[...]: List of discrete options.  
All other parameters use default values unless otherwise specified.
\end{table*}

\begin{table}[htbp]
  \centering
  \caption{Evaluation Results over 5 Different Random Seeds (Mean ± Std)}
  \label{tab:errorbar}
  \resizebox{\linewidth}{!}{ 
  \begin{tabular}{lccccccc}
    \toprule
    Metric & LR & RF & MLP & LGBM & XGB & CB & SMS\\
    \midrule
    RA(Serious) & 83.25 ± 0.50 & 86.83 ± 1.94 & 85.67 ± 1.76 & 86.41 ± 1.95 & 87.05 ± 1.87 & 87.36 ± 1.78 & 91.34 ± 0.06\\
    RA(Minors) & 73.15 ± 0.60 & 75.01 ± 0.43 & 74.25 ± 0.94 & 76.19 ± 0.14 & 76.23 ± 0.45 & 76.16 ± 0.29 & 77.53 ± 0.02\\
    \bottomrule
  \end{tabular}}
\end{table}

Table \ref{tab:errorbar} presents the evaluation results evaluated across five random seeds for each model, using their optimal hyperparameter configurations. Each reported value represents the mean and standard deviation (±) of the corresponding evaluation metric, reflecting the stability of the model's performance under stochastic variations (e.g., initialization or training noise). Because our method is an online approach requiring sequential processing of temporal data, traditional K-fold cross-validation (which involves shuffling) was unsuitable. Instead, we computed error bars by fixing the data split and varying only the model's internal random seeds, thereby isolating the impact of stochasticity while preserving temporal structure.

\end{document}